\documentclass{article}

\usepackage{arxiv}

\usepackage[utf8]{inputenc} %
\usepackage[T1]{fontenc}    %
\usepackage{hyperref}       %
\usepackage{url}            %
\usepackage{booktabs}       %
\usepackage{amsfonts}       %
\usepackage{nicefrac}       %
\usepackage{microtype}      %
\usepackage{xcolor}         %

\usepackage[numbers]{natbib}
\usepackage{multicol}
\usepackage{booktabs}       %
\usepackage{lipsum}		%
\usepackage{graphicx}
\usepackage{doi}
\usepackage{amsfonts,amsmath,amssymb,amsthm}
\usepackage{cleveref}
\usepackage{enumitem}
\usepackage{xcolor}
\usepackage{wrapfig}

\usepackage{thmtools}

\usepackage{subcaption}

\newtheorem{theorem}{Theorem}
\newtheorem{lemma}[theorem]{Lemma}

\newtheorem{corollary}[theorem]{Corollary}
\newtheorem{definition}[theorem]{Definition}
\newtheorem{assumption}[theorem]{Assumption}

\newcommand{\E}{\mathbb{E}}

\newcommand{\R}{\mathbb{R}}

\newcommand{\A}{\mathcal{A}}
\newcommand{\Z}{\mathcal{Z}}
\newcommand{\M}{\mathcal{M}}
\newcommand{\G}{\mathcal{G}} %
\newcommand{\Sspace}{\mathcal{S}}
\newcommand{\Eenv}{\mathcal{E}}
\newcommand{\Csafe}{\mathcal{C}_{\mathrm{safe}}}

\newcommand{\Zspace}{\mathcal{Z}}

\newcommand{\Vol}{\mathrm{Vol}}
\newcommand{\esssup}{\operatorname*{ess\,sup}}

\newcommand{\mypara}[1]{\medskip\noindent\textbf{#1.}~}

\title{Action Hallucination in Generative Vision-Language-Action Models}

\author{%
  Harold Soh \,\,\textnormal{and}\,\, Eugene Lim \\
  \textsuperscript{1}Department of Computer Science, School of Computing \\
  \textsuperscript{2}Smart Systems Institute \\
  National University of Singapore \\
  \href{mailto:harold@nus.edu.sg}{\texttt{harold@nus.edu.sg}}
  \quad
  \href{mailto:elimwj@u.nus.edu}{\texttt{elimwj@u.nus.edu}}
}

\begin{document}

\maketitle

\begin{abstract}
Robot Foundation Models, such as VLAs, promise end-to-end generative robot policies with broad generalization. Yet it remains unclear whether they fundamentally resolve the core problem of action generation in embodied settings, or overcome the long-standing challenges of robotics. We address this question by analyzing action hallucinations that violate physical constraints and their extension to plan-level failures. Focusing on latent-variable generative policies, we show that hallucinations can arise from structural mismatches between feasible robot behavior and common model architectures. We study three such barriers---topological, precision, and horizon---and show how they impose unavoidable tradeoffs. Our analysis  provides mechanistic explanations for  reported empirical failures of generative robot policies and suggests principled directions for improving reliability and trustworthiness, without abandoning their expressive power.
\end{abstract}

\section{Introduction}
\label{sec:intro}

Robot Foundation Models (RFMs) and Large-scale Vision-Language-Action (VLA) models~(e.g.,~\cite{rt-2,openvla,black2024pi_0,nvidia2025gr00tn1openfoundation}) promise a ``GPT moment'' for robotics: given image observations and a natural-language instruction, a single end-to-end policy produces a continuous control trajectory that solves the task.
Recent generative VLAs such as $\pi_{0.5}$~\cite{intelligence2025pi}, Gr00T N1~\cite{nvidia2025gr00tn1openfoundation}, and MolmoAct~\cite{molmoact} combine transformer encoders with conditional flow/diffusion models that map Gaussian noise into full action trajectories. Other contenders like diffusion policies~\cite{chi2023diffusionpolicy,liu2025rdtb} and flow-matching policies~\cite{zhang2024flowpolicy} map Gaussian noise into action chunks that are executed sequentially. These models demonstrate impressive semantic generalization, yet substantial prior work~\cite{zhai2025vfp,dai2025safe,yang2025embodiedbench,jia2024towards,dai2025}  has shown they also generate actions that are physically invalid, e.g., grasps through objects or plans that do not result in goal achievement. We refer to this phenomenon as \emph{action hallucination}.

Action hallucinations matter for two reasons. First, they are safety- and reliability-critical. Unlike hallucinated text, invalid actions are executed in the physical world and can cause damage  or hazardous interactions. Second, these failures may not be confined to exotic corner cases. 
Even a model that is highly capable \emph{on average} can intermittently produce nonsensical actions, a risk that is amplified when robots are deployed at scale and over long periods of time.

This paper argues that part of action hallucination is \emph{structural}: it arises from a mismatch between (i) the geometry/topology of physically valid behavior in robotics and (ii) the architectural regularities of modern generative action heads and planning pipelines. Concretely, many VLAs generate actions by sampling a \emph{connected} latent prior (typically Gaussian noise) and decoding it through a map that is continuous in the latent (as in diffusion, flow-matching, and conditional flows). At execution time, these models are often used sequentially and/or wrapped in test-time sampling (and possible verification). This work clarifies \emph{when} these design choices induce hallucinations (regardless of the data or model scale) and \emph{what} changes are needed to mitigate them.

\mypara{Contributions} We use largely elementary  mathematical tools (topology, measure bounds, and probabilistic arguments) to construct a coherent theory of {action hallucination} that matches the architectural and algorithmic characteristics of modern VLAs. 
This approach connects to classic insights in robotics and leads to a clean formal framework to analyze generative VLAs---the constraints of non-convexity, precision, and compounding errors are foundational to robotics~\cite{LozanoPerez1979,HsuLatombe1999, ross2010efficient} and our contribution is to study these constraints in the context of modern VLAs. Concretely, we contribute:
\begin{itemize}[leftmargin=*,itemsep=0pt,topsep=0pt]
\item A \textbf{topological impossibility result} for hallucination-free multi-mode coverage under continuous latent heads, and a quantitative \textbf{isoperimetric lower bound} connecting hallucination to decoder smoothness and mode separation.
\item A \textbf{precision-barrier analysis} for contact tasks, including a lower bound and a \textbf{generative precision trilemma} (fold, collapse, or hallucinate), plus a refinement-step tradeoff that clarifies why iterative diffusion/flow-style generation helps.
\item A reliability-aware analysis of \textbf{verification-guided planning} for \textbf{long-horizon tasks} that identifies when test-time compute helps and why adaptive search is needed under verifier noise.
\end{itemize}
Our theoretical analysis complements the growing body of empirical work on VLAs (e.g.,~\cite{black2024pi_0,molmoact,dai2025,jia2024towards}) and  helps clarify why VLAs work, as well as when and why they may fail. 
Our focus is {not} to develop new general mathematical theory, but rather, adapt existing tools to analyze the connection between generative models and embodied action generation. 
We discuss connections to recent experimental results and to practical system design choices, e.g., hybrid discrete-continuous structure, iterative refinement, and verification-guided planning.

\paragraph{Related Work.}
Hallucination has been widely studied in LLMs and VLMs, where it is typically framed as fluent generation that violates an external correctness oracle such as factuality, groundedness, or visual consistency~\cite{lei25hall,liu2025reducing,Xu2025,Kalai2024,Ji2022hallu}. We adopt this oracle-based view, but replace textual truthfulness with \emph{physical and task validity} in embodied settings.

Our analysis connects three previously separate threads. First, robotics has long recognized that feasible behavior is rarely convex or safely interpolatable: obstacles induce nonconvex and disconnected free spaces~\cite{LozanoPerez1979}, narrow passages make feasible connections hard to find~\cite{HsuLatombe1999,HsuLatombe2006}, and contact-rich manipulation often requires hitting thin, lower-dimensional contact sets and mode transitions~\cite{Mason1981,Mason1985,LozanoPerezMasonTaylor1984,hauser2010multi}. 
Second, prior work on deep generative models shows that continuous generators with connected latent priors cannot exactly represent disconnected target supports, leading to mode dropping or bridging samples~\cite{salmona2022can,Khayatkhoei2018,Tanielian2020,Issenhuth2023,Papamakarios2021,Stimper2022,aithal2024understanding}. In this work, we interpret these bridge samples physically; in robot action spaces, they can correspond to collisions, infeasible contacts, joint-limit violations, or actions that fall into non-progress regions. Our precision analysis likewise quantifies the density concentration needed to hit thin feasible tubes, and shows how flow/DDIM-style samplers realize this concentration through local contraction across refinement steps. 

Finally, our work relates to long-horizon decision-making and verification-guided search. Multi-step planning is computationally hard~\cite{Reif1979,Canny1988}, and compounding error is a classic source of long-horizon failure in imitation learning~\citep{ross2010efficient}. Viewed probabilistically, long-horizon success becomes a rare event, motivating decomposition and adaptive sampling methods~\citep{kahn1951estimation,rubinstein2004cross}. To see if VLAs can bypass these constraints through scale or test-time compute~\cite{kwok2025robomonkey,yoon2025monte,zhaollms23}, we formalize how horizon-induced rarity, verifier noise, and adaptive proposal mechanisms interact to reduce---or limit the reduction of---behavioral hallucinations.

\section{Preliminaries}

In this section, we formalize robot environments, task instances, and generative robot policies, then use this framework to define action hallucinations in embodied AI. Due to space constraints, we focus on the core ideas; full proofs, formal arguments, and additional details are deferred to the appendix.

\paragraph{Environment Model and Tasks.} We consider a robot that operates in an environment and seeks to achieve a goal.

\begin{definition}[Environment]
An environment is a tuple $\Eenv=(\Sspace,\A,\Omega,O,\mathcal T,\Csafe)$, 
where \(\Sspace\subseteq\R^n\) is the true physical state space,
\(\A\subseteq\R^d\) is the continuous action space, $\Omega$ is the observation space,
\(O:\Sspace\to \Omega \) is the observation map,
\(\mathcal T:\Sspace\times\A\to\Sspace\) is the transition map, and
\(\Csafe\subseteq\Sspace\) is the set of physically consistent and
constraint-satisfying states.
\end{definition}

\begin{definition}[Goal-reaching task instance]
A task instance is
$
    I=(\Eenv,s_0,\ell,\G,T),
$
where \(s_0\in\Csafe\) is the initial state, \(\ell\) is the language
instruction, \(\G\subseteq\Csafe\) is the goal set, and \(T\in\mathbb N\) is
the horizon or action budget.
\end{definition}

An action is generic in that it may denote a target pose, joint velocity, torque command, motion primitive, or action chunk. When \(a\) is a chunk
or trajectory, \(\mathcal T(s,a)\) denotes the terminal state after execution.
We use deterministic observations and dynamics for clarity; stochastic
observations or transitions can be handled by replacing \(O\) and
\(\mathcal T\) with kernels and taking the probabilities below over this
additional randomness. The deterministic setting is the special case where the kernels are point masses.
The set \(\Csafe\) is a ground-truth physical object used to define
hallucination. 

\begin{wrapfigure}{r}{0.48\textwidth}
    \centering
    \vspace{-2pt}
    \includegraphics[width=0.45\textwidth]{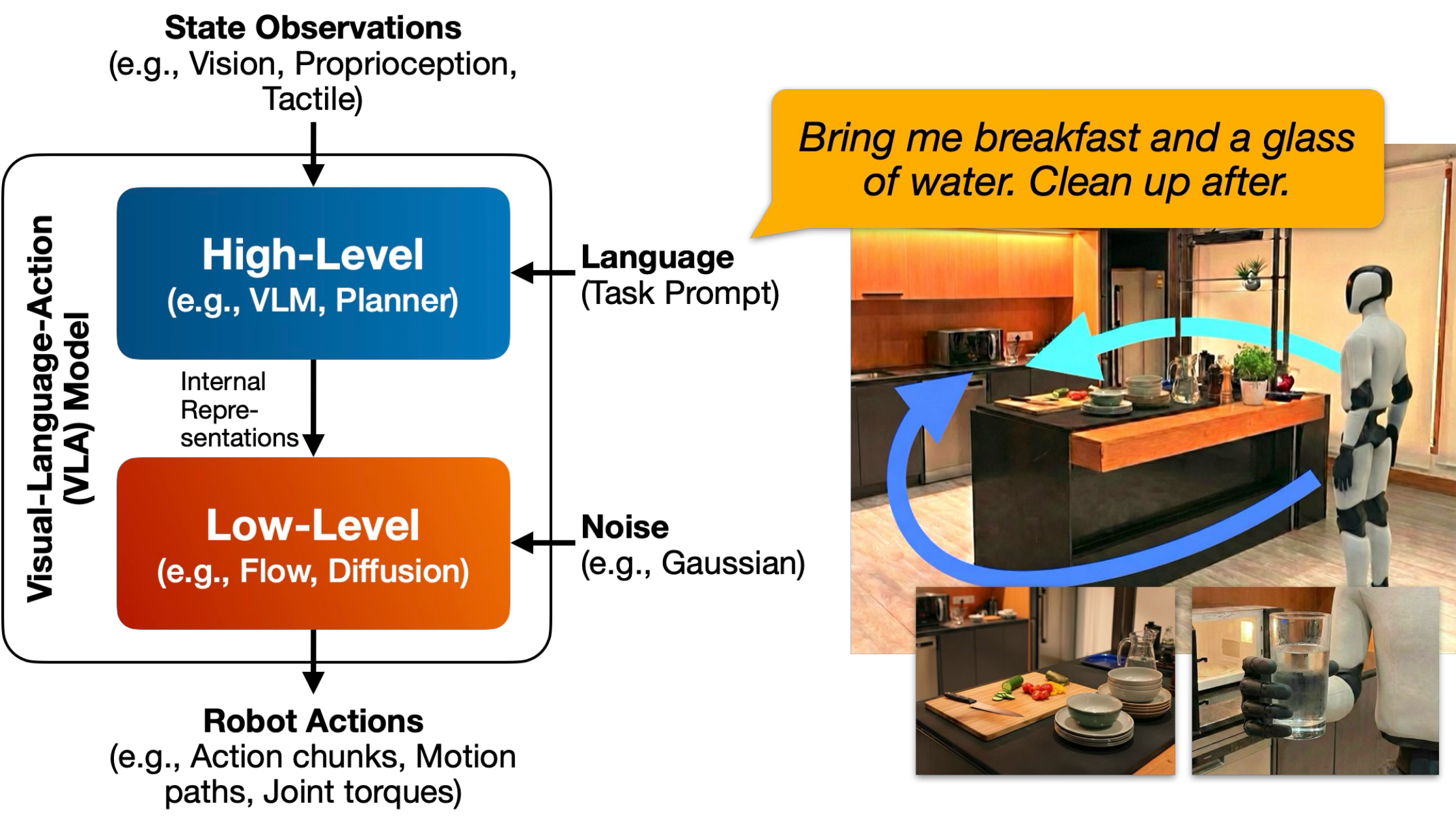}
    \caption{\small (Left) The prototypical generative VLA analyzed in this work. Given state observations, a task prompt, and a noise sample, the model outputs robot actions. Recent VLAs are structured into a high-level planner and a low-level action head, but part of our theory also applies to those that do not have this explicit structure (e.g., RDT~\cite{liu2025rdtb}). (Right) An example where a robot is given a long-horizon task that involves multi-modality and precision.}
    \label{fig:vla}
    \vspace{-10pt}
\end{wrapfigure}
\paragraph{VLA Contexts and Latent Action Heads.}
At time \(t\), the robot observes \(o_t=O(s_t)\). Let
$
    h_t=(o_0,a_0,\ldots,o_{t-1},a_{t-1},o_t)
$
be the observation-action history. A VLA encoder maps history and language to a
context
$
    c_t=E_\phi(h_t,\ell)\in\mathcal C .
$
The context may contain image-language embeddings, proprioception, memory, or
other internal representations. We define the oracle-state case as a special case
\(c_t=c^\star(s_t,\ell)\). %
This setting deliberately \emph{favors} the policy: the action head receives all
task-relevant state information, allowing us to isolate limitations of action
generation itself rather than confounding them with perceptual errors or state
estimation failures. We state our main results for this oracle-state
specialization, but the same arguments extend to more general settings by
replacing \(s\) with the history/context.

\begin{assumption}[Latent prior]
\label{ass:latent-prior}
The latent space $\Zspace$ is a nonempty, open, and path-connected subset of $\R^m$.
The latent prior $p_Z$ is absolutely continuous with respect to $m$-dimensional Lebesgue measure, and has a density $\rho_Z$ such that $\rho_Z(z) > 0$ for almost every $z \in \Zspace$, and is bounded below on compact subsets of $\Z$.

\end{assumption}

\begin{definition}[Latent-head VLA policy]
A latent-head VLA policy is a map
$
    \pi_\theta:\mathcal C\times\Z\to\A .
$
For each fixed context \(c\), the map \(z\mapsto \pi_\theta(c,z)\) is
continuous. The induced action distribution is
$
    p_\theta(\cdot\mid c)
    :=
    \big(\pi_\theta(c,\cdot)\big)_{\#}p_Z .
$
\end{definition}

This captures continuous generative action heads such as flow
matching, DDIM, and conditional flow decoders. 
For stochastic diffusion samplers with fresh noise injected at multiple denoising steps, one can regard the full collection of injected noise variables as the latent input. 
In the oracle-state case, we write
$
    p_\theta(\cdot\mid s,\ell)
    :=
    p_\theta(\cdot\mid c^\star(s,\ell)),
$
and omit \(\ell\) when it is fixed. 

\paragraph{Action and Plan Hallucination.} Given the structure of existing VLAs, we consider two major types of behavior hallucinations. At the low-level, an \emph{action hallucination} is an action (or action chunk / trajectory) whose induced rollout violates physical laws or environmental constraints (e.g., teleporting the robot, penetrating a solid obstacle, or exceeding kinematic limits). At the high-level, even physically valid actions can still fail to achieve the task goal. Hence, a \emph{plan hallucination} is a generated plan (sequence of actions) that violates constraints and/or fails to reach the goal. Plan hallucinations subsume action hallucinations but also include \emph{goal failure}. 

\begin{definition}[Physical validity oracle]
For a fixed environment \(\Eenv\), define
$
    f_{\mathrm{phys}}:\Sspace\times\A\to\{0,1\},
$
where \(f_{\mathrm{phys}}(s,a)=1\) if executing \(a\) from state \(s\) respects
the relevant physical and environmental constraints, and
\(f_{\mathrm{phys}}(s,a)=0\) otherwise.
\end{definition}

For chunks or trajectories, \(f_{\mathrm{phys}}\) checks all intermediate
states, not only the terminal state. State-dependent action/admissibility constraints,
such as torque limits and
task-specific feasibility constraints, are folded into \(f_{\mathrm{phys}}\).
Define the safe action set, $A_{\mathrm{safe}}(s)=\{a\in\A:f_{\mathrm{phys}}(s,a)=1\}$ and the forbidden/unsafe action set $A_{\mathrm{forb}}(s)=\A\setminus A_{\mathrm{safe}}(s)$.

\begin{definition}[Action hallucination]
At physical state \(s\) and context \(c\), a sampled action
\(a=\pi_\theta(c,z)\) is an action hallucination if
$
    f_{\mathrm{phys}}(s,a)=0.
$
Its probability is
$
    H_\theta(s,c)
    :=
    \Pr_{Z\sim p_Z}
    \left[
        f_{\mathrm{phys}}\big(s,\pi_\theta(c,Z)\big)=0
    \right].
$
In the oracle-state case, we write
$
    H_\theta(s,\ell):=H_\theta(s,c^\star(s,\ell)),
$
or simply \(H_\theta(s)\) when \(\ell\) is fixed.
\end{definition}

A closed-loop VLA induces a finite action sequence when unrolled. Given
\(I=(\Eenv,s_0,\ell,\G,T)\) and i.i.d. latents \(z_{0:T-1}\), define
    $o_t=O(s_t)$,
    $c_t=E_\phi(h_t,\ell)$, $a_t=\pi_\theta(c_t,z_t)$, and 
    $s_{t+1}=\mathcal T(s_t,a_t)$.
The induced plan is
$
    \tau_{\theta,\phi}(I,z_{0:T-1})
    :=
    (a_0,\ldots,a_{T-1})\in\A^T.
$
Here ``plan'' means the grounded action sequence induced by closed-loop
execution (which need not be a symbolic subtask sequence).

\begin{definition}[Plan hallucination]
For an action sequence \(\tau=(a_0,\ldots,a_{T-1})\), let
\(s_{t+1}=\mathcal T(s_t,a_t)\) be its rollout from \(s_0\). Define
\(f_{\mathrm{plan}}(s_0,\tau;\G)=1\) iff
$
    f_{\mathrm{phys}}(s_t,a_t)=1
    \quad\text{for all }t=0,\ldots,T-1,$
and
$
    s_T\in\G .
$
Otherwise \(f_{\mathrm{plan}}(s_0,\tau;\G)=0\). A plan hallucination occurs
when \(f_{\mathrm{plan}}(s_0,\tau;\G)=0\) and its probability on instance \(I\) is
$
    H^{\mathrm{plan}}_{\theta,\phi}(I)
    :=
    \Pr
    \left[
        f_{\mathrm{plan}}
        \big(
            s_0,
            \tau_{\theta,\phi}(I,Z_{0:T-1});
            \G
        \big)
        =0
    \right].
$
\end{definition}
We call \(I\) solvable if some \(\tau\in\A^T\) satisfies
\(f_{\mathrm{plan}}(s_0,\tau;\G)=1\), and focus on solvable instances.

\begin{figure*}
    \centering
    \includegraphics[width=0.99\linewidth]{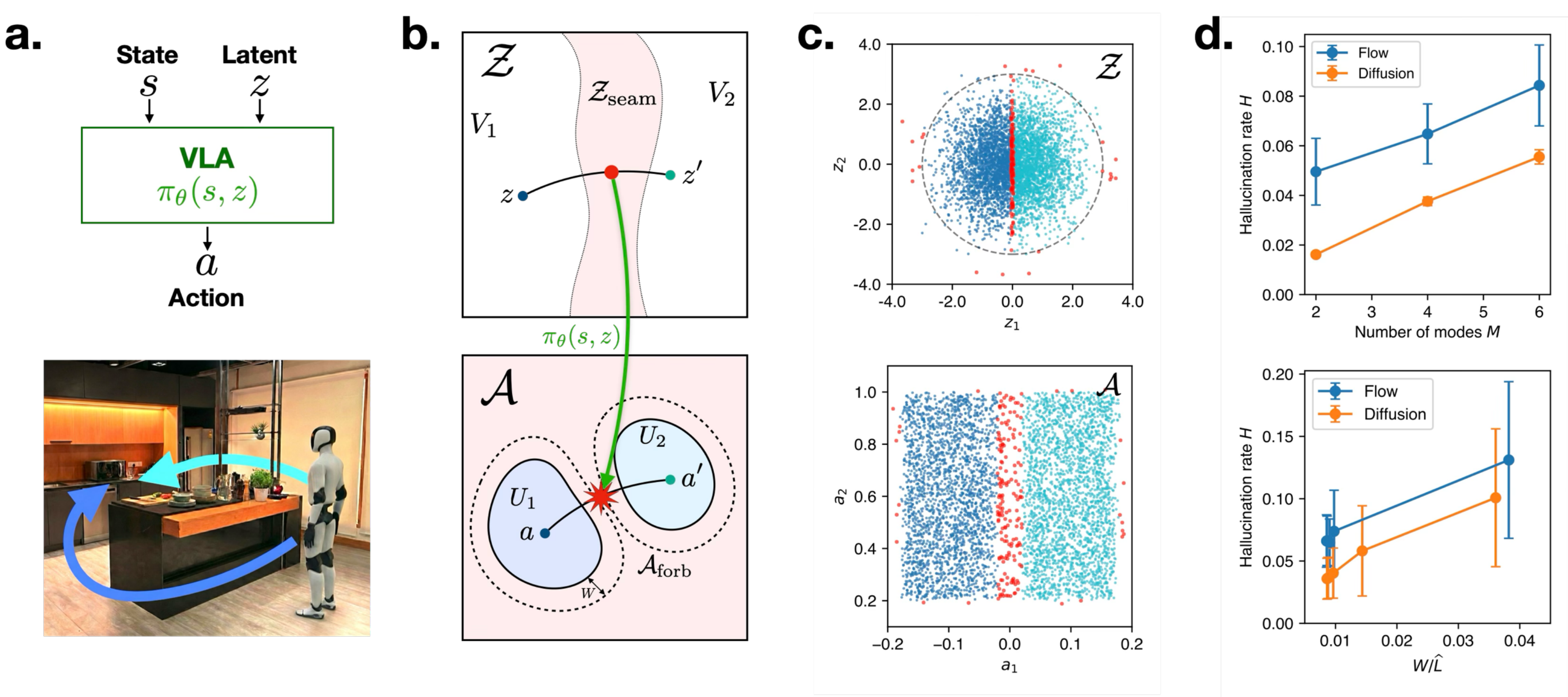}
    \caption{\small\textbf{Topological barrier for latent-variable VLA policies.}
(\textbf{a}) We study generative VLAs whose action head is a conditional latent-variable policy that maps a task-relevant state (or history of observations) and latent noise $z$ to a continuous action (or action chunk). In the illustrated navigation example, reaching the microwave requires going \emph{left} or \emph{right} around the kitchen island, inducing two qualitatively distinct modes of valid behaviors.
(\textbf{b}) Schematic of the topological barrier (Lemma~\ref{lem:topological}). The valid actions (bottom panel) decompose into disconnected components $U_1$ and $U_2$ (e.g., left vs.\ right), separated by a forbidden region $\mathcal A_{\mathrm{forb}}$. If $\pi_\theta(s,\cdot)$ is continuous and maps $z\mapsto a\in U_1$ and $z'\mapsto a'\in U_2$, then any continuous latent path between $z$ and $z'$ must cross an \emph{open seam} $\mathcal Z_{\mathrm{seam}}=\pi_\theta(s,\cdot)^{-1}(\mathcal A_{\mathrm{forb}})$, implying non-zero hallucination probability. %
(\textbf{c}) Diffusion model trained on bimodal action data: red points in $\mathcal Z$ (top) lie on the seam and decode to forbidden actions in $\mathcal A$ (bottom). See Appendix~\ref{app:band-topology-exp} for details. 
(\textbf{d}) Empirical trends for flow matching and diffusion. Hallucination rates $H$ can increase approximately linearly with the number of covered modes $M$ (top) and grows with $W/\widehat{L}$ (bottom), consistent with Theorem~\ref{thm:isoperimetry} ($\widehat{L}$ is the numerically-estimated Lipschitz constant).}
    \label{fig:topo}
    \vspace{-1em}
\end{figure*}

\section{Low-Level Action Hallucinations}
    \vspace{-0.4em}

\subsection{Topological Barrier for Multi-Mode Coverage}
\label{sec:topological}

Recent VLAs and generative policies are often lauded for their ability to represent complex \emph{multi-modal} action trajectories (e.g., ``go left'' or ``go right'' around an obstacle). However, experiments show these models still produce invalid actions despite significant training~\cite{zhai2025vfp,dai2025safe,jia2024towards} or   drop action modes~\cite{pan2025adonoisingdispellingmyths}. Figure \ref{fig:topo} provides a high-level overview of our theoretical results below that explains why this can occur.

Fix a physical state \(s\in\Sspace\) and instruction $\ell$. 
We first consider situations where the valid actions at a fixed state decompose into two distinct safe `modes', $U_L$ and $U_R$ (e.g., two inverse-kinematics branches, two grasp approach directions), separated by physically forbidden actions (e.g., joint-limit violation, infeasible contact). 

\begin{assumption}
\label{ass:disconnected}
For a given state $s$, assume that the safe action set $\A_{\mathrm{safe}}(s)$
  has at least two disjoint, nonempty path-connected components $U_L$ and $U_R$ in the subspace topology of $\A$. Also assume that the forbidden region $\A_{\mathrm{forb}}(s)$
  is open in $\A$ and has nonempty interior.
\end{assumption}

In this setting, topology dictates that a latent action head that attempts to cover both modes must hallucinate. 

\begin{restatable}[Topological Barrier]{lemma}{TopologicalBarrier}
\label{lem:topological}
Suppose Assumption \ref{ass:disconnected} holds and there exist $z_L,z_R \in \Zspace$ such that
$\pi_\theta(s,z_L) \in U_L$ and $\pi_\theta(s,z_R) \in U_R$.
Define the \emph{seam set}, 
$\Zspace_{\mathrm{seam}}(s) := \{ z \in \Zspace : \pi_\theta(s,z) \in \A_{\mathrm{forb}}(s)\}$.
Then $\Zspace_{\mathrm{seam}}(s)$ is nonempty and open in $\Zspace$. The hallucination probability at $s$ is strictly positive, 
  $
    H_\theta(s) = \Pr_{z \sim p_Z}\left[ z \in \Zspace_{\mathrm{seam}}(s)\right] > 0.
  $
\end{restatable}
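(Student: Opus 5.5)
The plan is to argue along a latent path, using only continuity of the decoder, the path-component structure of the safe set, and positivity of the prior density. Write $g := \pi_\theta(s,\cdot) : \Zspace \to \A$, which is continuous by the definition of a latent-head VLA policy. By definition of action hallucination, $f_{\mathrm{phys}}(s,g(z))=0$ exactly when $g(z)\in \A_{\mathrm{forb}}(s)$, i.e.\ when $z\in\Zspace_{\mathrm{seam}}(s)=g^{-1}(\A_{\mathrm{forb}}(s))$. So the identity $H_\theta(s)=\Pr_{z\sim p_Z}[z\in\Zspace_{\mathrm{seam}}(s)]$ is immediate, and three things remain: nonemptiness, openness, and positive measure of the seam.

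\textbf{Nonemptiness.} I will go by contradiction. Since $\Zspace$ is path-connected (Assumption~\ref{ass:latent-prior}), choose a continuous path $\gamma:[0,1]\to\Zspace$ with $\gamma(0)=z_L$ and $\gamma(1)=z_R$. Then $g\circ\gamma$ is a continuous path in $\A$ from $g(z_L)\in U_L$ to $g(z_R)\in U_R$. Suppose $\Zspace_{\mathrm{seam}}(s)$ were empty, or more weakly that $\gamma$ avoids it. Then $g\circ\gamma$ lies entirely in $\A_{\mathrm{safe}}(s)$, so it is a path inside $\A_{\mathrm{safe}}(s)$ joining a point of $U_L$ to a point of $U_R$. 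Hence the two points lie in the same path component of $\A_{\mathrm{safe}}(s)$.

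This is the one step needing care. Assumption~\ref{ass:disconnected} must be read as saying that $U_L$ and $U_R$ are distinct path components, i.e.\ maximal path-connected subsets. Then any path in $\A_{\mathrm{safe}}(s)$ starting in $U_L$ stays in $U_L$, which contradicts $U_L\cap U_R=\emptyset$. So some $t^\ast\in(0,1)$ has $g(\gamma(t^\ast))\in\A_{\mathrm{forb}}(s)$, and the seam is nonempty. In fact it meets every latent path from $z_L$ to $z_R$.

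\textbf{Openness.} $\A_{\mathrm{forb}}(s)$ is open in $\A$ by assumption and $g$ is continuous, so $\Zspace_{\mathrm{seam}}(s)=g^{-1}(\A_{\mathrm{forb}}(s))$ is open in $\Zspace$. Because $\Zspace$ is itself open in $\R^m$, the seam is also open in $\R^m$.

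\textbf{Positive probability.} A nonempty open subset of $\R^m$ contains a ball $B$ whose closure $\bar B$ is compact and lies in $\Zspace$, and $B$ has positive Lebesgue measure. The density $\rho_Z$ is bounded below on compact subsets of $\Zspace$ by some $\epsilon>0$ on $\bar B$ (equivalently, one can use $\rho_Z>0$ a.e.). Therefore
\[
\Pr[Z\in\Zspace_{\mathrm{seam}}(s)] \;\ge\; \int_B \rho_Z \;\ge\; \epsilon\,\mathrm{Vol}(B) \;>\; 0 .
\]
The main obstacle is the path-component step, since it is the only place where the hypotheses must be interpreted precisely; the other steps are routine.
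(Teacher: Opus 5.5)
Your proposal is correct and follows the paper's proof step for step. Continuity along a latent path from $z_L$ to $z_R$ forces a forbidden action because $U_L$ and $U_R$ are distinct path components. Openness follows from the preimage of an open set under a continuous map, and positivity from a ball inside the seam on which the prior density is bounded below. Your explicit reading of $U_L, U_R$ as maximal path components is the same interpretation the paper's proof uses implicitly.
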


Lemma~\ref{lem:topological} (proof in Appendix~\ref{app:proof:topo}) translates a geometric concept in deep generative modeling (e.g., ~\cite{Khayatkhoei2018,Tanielian2020,salmona2022can}) to robotics---if there are distinct modes of valid behaviors at a given state, then any generative policy that is smooth in its latent input must create a ``seam'' of \emph{in-between} latents that decode to \emph{in-between} actions. 
The policy cannot assign probability mass to both modes without also assigning probability mass to the invalid or unsafe actions between them. %

Next, we derive a quantitative lower bound in a multi-mode setting ($\geq 2$). Assumption \ref{ass:buffered-modes} below extends the previous setup to one where the valid action set $\A_\mathrm{safe}(s)$ splits into $M$ disconnected {modes} $U_1, U_2, \dots, U_M$. These modes in action space are separated by at least a distance $2W$ with a forbidden buffer of thickness $W$ around each mode.

\begin{assumption}
\label{ass:buffered-modes}
For a given state $s$, we assume that the safe set decomposes into
        $M \ge 2$ nonempty, pairwise-disjoint, closed, path-connected components
          $\A_{\mathrm{safe}}(s) = \bigsqcup_{i=1}^M U_i.$ Assume there exists $W>0$ such that
          $\operatorname{dist}(U_i,U_j) := \inf_{a\in U_i,a'\in U_j} \|a-a'\| \ge 2W \,\, \text{for all } i\neq j,$ 
        and for each $i$, the $W$-neighborhood of $U_i$ outside $U_i$ itself
        is forbidden:
        $
          \bigl\{ a \in \A :
            0 < \operatorname{dist}(a,U_i) < W
          \bigr\}
          \subseteq \A_{\mathrm{forb}}(s).
        $
\end{assumption}

In the common Gaussian latent setting, we can lower-bound the hallucination rate of smooth policies that are {locally} Lipschitz in $z$. %

\begin{restatable}[Isoperimetric lower bound on action hallucination]{theorem}{isoperimetric}
\label{thm:isoperimetry}
Fix a state $s$ satisfying Assumption~\ref{ass:buffered-modes}. Let $Z\sim\mathcal{N}(0,I_m)$ and fix a radius $R>0$.
Assume that the latent-to-action head $z\mapsto \pi_\theta(s,z)$ is $L$-Lipschitz on the typical-latent ball
$B_R := \{z\in\R^m : \|z\|\le R\}$, i.e.,
$  \|\pi_\theta(s,z)-\pi_\theta(s,z')\|
  \le L\,\|z-z'\|
  \,\, \forall z,z'\in B_R$ and $L>0$. 
Let $\epsilon := W/L$.
For each safe mode $U_i$, define the latent preimage $V_i := \pi_\theta(s,\cdot)^{-1}(U_i)$ and the in-ball mass
$p_i^{(R)} := \Pr\big[Z\in V_i\cap B_R\big].$ Let  $q(R)=\Pr\big[\|Z\|>R\big] $.
Then the hallucination probability at $s$ satisfies
\begin{equation}
  H_\theta(s)
  \ge
  \sum_{i=1}^M\left[\Phi\left(\Phi^{-1}(p_i^{(R)}) + \epsilon \right) - p_i^{(R)}\right] - q(R),
  \label{eq:iso-bound}
\end{equation}
where $\Phi$ is the CDF of the standard normal distribution. 
    
\end{restatable}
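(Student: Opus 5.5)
We prove the bound with the Gaussian isoperimetric inequality. Write $A_i := V_i\cap B_R$ and let $A_i^{\epsilon} := \{z\in\R^m:\operatorname{dist}(z,A_i)<\epsilon\}$ be its open $\epsilon$-enlargement. The plan is to show that the shells $A_i^{\epsilon}\setminus A_i$, once restricted to $B_R$, consist of seam latents. We then show that these shells are pairwise disjoint and disjoint from every $V_j$, lower-bound each shell's Gaussian mass by Borell--Sudakov--Tsirelson isoperimetry, and pay $q(R)$ for the part of the shells that leaves the ball.

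\emph{Step 1 (shells decode into the forbidden buffer).} Take $z\in (A_i^{\epsilon}\cap B_R)\setminus V_i$. Pick $z'\in A_i$ with $\|z-z'\|<\epsilon$. Both $z$ and $z'$ lie in $B_R$, so the Lipschitz bound gives
\[
\|\pi_\theta(s,z)-\pi_\theta(s,z')\|\le L\|z-z'\|<L\epsilon=W .
\]
Since $\pi_\theta(s,z')\in U_i$, we get $0<\operatorname{dist}(\pi_\theta(s,z),U_i)<W$. Strict positivity holds because $U_i$ is closed and $z\notin V_i$. By Assumption~\ref{ass:buffered-modes}, $\pi_\theta(s,z)\in\A_{\mathrm{forb}}(s)$, so $z$ is a seam latent. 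The same distance bound shows that $\pi_\theta(s,z)\notin U_j$ for $j\ne i$, since $\operatorname{dist}(U_i,U_j)\ge 2W>W$. Hence the sets $A_i^{\epsilon}\cap B_R\setminus V_i$ miss all the $V_j$.

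\emph{Step 2 (disjointness across modes).} Suppose $z\in B_R$ lies in two shells, for $i\ne j$. Then $\pi_\theta(s,z)$ is within distance $W$ of both $U_i$ and $U_j$. The triangle inequality then gives $\operatorname{dist}(U_i,U_j)<2W$, which is a contradiction. So the in-ball shells are pairwise disjoint subsets of $\Zspace_{\mathrm{seam}}(s)$, and
\[
H_\theta(s)\ge \sum_i \Pr\bigl[Z\in (A_i^{\epsilon}\setminus A_i)\cap B_R\bigr].
\]
This needs care. The shells must be defined as $A_i^\epsilon\setminus A_i$ intersected with $B_R$. A point of $(A_i^\epsilon\setminus A_i)\cap B_R$ cannot lie in $V_i$, because $V_i\cap B_R=A_i$. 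So Step 1 applies to it.

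\emph{Step 3 (isoperimetry and the ball correction).} The Gaussian isoperimetric inequality states that $\Pr[Z\in A^{\epsilon}]\ge\Phi(\Phi^{-1}(\Pr[Z\in A])+\epsilon)$ for measurable $A$. Applying it to $A_i$ gives shell mass, before intersecting with $B_R$, of at least $\Phi(\Phi^{-1}(p_i^{(R)})+\epsilon)-p_i^{(R)}$. Removing the part outside $B_R$ costs at most $\Pr[\|Z\|>R]$ per mode. This is the step I expect to be the main obstacle.

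A naive per-mode subtraction yields $-Mq(R)$, not the stated $-q(R)$. To avoid this, I will first take the union over $i$ of the full shells $S_i := A_i^{\epsilon}\setminus A_i$. These shells are disjoint inside $B_R$ by Step 2. Then
\[
\sum_i \Pr[S_i\cap B_R]\ \ge\ \Pr\Bigl[\textstyle\bigcup_i S_i\cap B_R\Bigr]\ \ge\ \Pr\Bigl[\textstyle\bigcup_i S_i\Bigr]-q(R).
\]
Outside the ball the shells may overlap, so I still need $\Pr[\bigcup_i S_i]\ge\sum_i\Pr[S_i]-q(R)$. Overlaps occur only outside $B_R$. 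I will try to show that the overlapping mass counted with multiplicity is controlled by $q(R)$. If this cannot be done cleanly, I will accept the constant $M q(R)$, or else argue that overlap outside $B_R$ only reduces the sum by an amount absorbed into $q(R)$. Assuming the bookkeeping closes, combining Steps 1--3 gives \eqref{eq:iso-bound}. Measurability of the $A_i$ follows from continuity of $\pi_\theta$ and closedness of the $U_i$.
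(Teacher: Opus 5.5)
Your Steps 1--3 follow the paper's argument, but the proposal stops short of the stated bound. The step you flag as the main obstacle is left open, and your fallback constant $-Mq(R)$ proves a strictly weaker statement. The missing idea is that the shells $S_i=A_i^{\epsilon}\setminus A_i$ are pairwise disjoint in all of $\R^m$, not only inside $B_R$.

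You proved disjointness by looking at the image $\pi_\theta(s,z)$. That needs $z\in B_R$ for the Lipschitz bound, which is why you could not control what happens outside the ball. Apply the Lipschitz bound to the anchor points instead, since they always lie in $B_R$. If $v_i\in A_i$ and $v_j\in A_j$ with $i\neq j$, then $2W\le\|\pi_\theta(s,v_i)-\pi_\theta(s,v_j)\|\le L\|v_i-v_j\|$. Hence $\operatorname{dist}(A_i,A_j)\ge 2\epsilon$ in latent space. Now take any $z\in\R^m$, inside or outside the ball, with $\operatorname{dist}(z,A_i)<\epsilon$ and $\operatorname{dist}(z,A_j)<\epsilon$. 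It would yield anchors with $\|v_i-v_j\|<2\epsilon$, a contradiction. So your worry that the shells overlap outside $B_R$ is unfounded. This latent separation is exactly the paper's Step 1, and it makes your action-space disjointness argument in Step 2 redundant.

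With global disjointness you have $\Pr[Z\in\bigcup_i S_i]=\sum_i\Pr[Z\in S_i]$ exactly. A single subtraction then closes the argument:
\[
H_\theta(s)\ \ge\ \Pr\Bigl[Z\in\textstyle\bigcup_i S_i\cap B_R\Bigr]\ \ge\ \textstyle\sum_i\Pr[Z\in S_i]-\Pr[\|Z\|>R].
\]
Combined with isoperimetry applied to each $A_i$, this gives \eqref{eq:iso-bound}.

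Your proposed intermediate target $\Pr[\bigcup_i S_i]\ge\sum_i\Pr[S_i]-q(R)$ would not have been enough even if proved. Chained with your inequality $\sum_i\Pr[S_i\cap B_R]\ge\Pr[\bigcup_i S_i]-q(R)$, it subtracts $q(R)$ twice and gives $-2q(R)$.

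Your other details are fine. The paper works with radii $r<\epsilon$ and lets $r\uparrow\epsilon$; your open $\epsilon$-enlargement with strict inequalities handles that directly. Your use of the closedness of $U_i$ to get $\operatorname{dist}(\pi_\theta(s,z),U_i)>0$, which the buffer condition requires, is correct.
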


Compared to Lemma~\ref{lem:topological}, Theorem~\ref{thm:isoperimetry} provides a quantitative lower bound that reveals key factors that influence the action hallucination rate. As before, the problem is interpolation between the $M$ modes. The more smoothly the policy varies with the latent, the more probability mass leaks into the forbidden seam. In addition to the number of modes $M$, a key ratio to note is $\epsilon = W/L$. $W$ measures how wide the unsafe gap is in action space, and $L$ measures how quickly actions can change as we adjust the latent $z$. If the model is very smooth in the latent space, then it must transition gradually, and a larger set of latents produce intermediate invalid actions. 
The main idea underlying the proof (Appendix~\ref{app:proof:isoperimetric}) is to apply Gaussian isoperimetry to lower bound the mass of disjoint latent ``halos'' of radius $\epsilon=W/L$ around each safe-mode set.%

\mypara{Implications for VLAs} 
Our analysis reveals a tension between \emph{diversity} and \emph{safety}. Datasets can be multi-modal (e.g., distinct grasp types or approach trajectories) and a low-level continuous policy trained to cover all $M$ modes equally faces a geometric dilemma: (i) \textbf{Accept Hallucination:} To cover all diverse modes, the policy is geometrically required to expose significant Gaussian surface area to the forbidden regions, resulting in invalid ``interpolated'' actions; or (ii) 
    \textbf{Mode Collapse:} To reduce the hallucination rate (minimize $H_\theta$), the policy can assign very little mass to some modes or drop modes entirely (reduce $M$). %

Crucially, this dilemma is a geometric consequence and does \emph{not} disappear with better optimization or more data. %
One theoretical solution is to break the continuity assumption: generative action heads can employ \emph{hybrid} designs that use discrete decisions to switch between modes (e.g., discrete  tokens~\cite{saycan2022arxiv,pi0.6,hoeg2026hybrid}, diffusion options~\cite{feng25}, or mixture-of-experts (MoE) gating~\cite{hao2026abstracting}). 
Recent work provides suggestive empirical support for this prescription; VFP~\cite{zhai2025vfp} uses a MoE flow decoder to encourage mode-specific action generation, while Hybrid Diffusion Planner~\cite{hoeg2026hybrid} jointly samples discrete plan tokens and continuous trajectories to reduce long-horizon mode confusion. Both lead to better performance. 
In VLAs, the VLM/high-level planner can sample the mode before invoking the generative action head. However, this shifts the burden of hallucination avoidance to the planner (see Sec. \ref{sec:planning}). %
Theorem~\ref{thm:isoperimetry} also indicates that smoothness is a ``double-edged sword'' for contemporary models: on one hand, smoothness is generally encouraged in (learnt) policies as it is associated with more stable training~\cite{salmona2022can} and robot behavior---we usually prefer the case that similar $z$ imply similar actions. On the other hand, smoothness likely has a \emph{negative} effect on action hallucination; all else being equal, in~\eqref{eq:iso-bound}, each term $\Phi\bigl(\Phi^{-1}(p_i^{(R)})+\epsilon\bigr)-p_i^{(R)}$ is  increasing in $\epsilon=W/L$. %

\begin{figure*}
    \centering
    \includegraphics[width=0.99\linewidth]{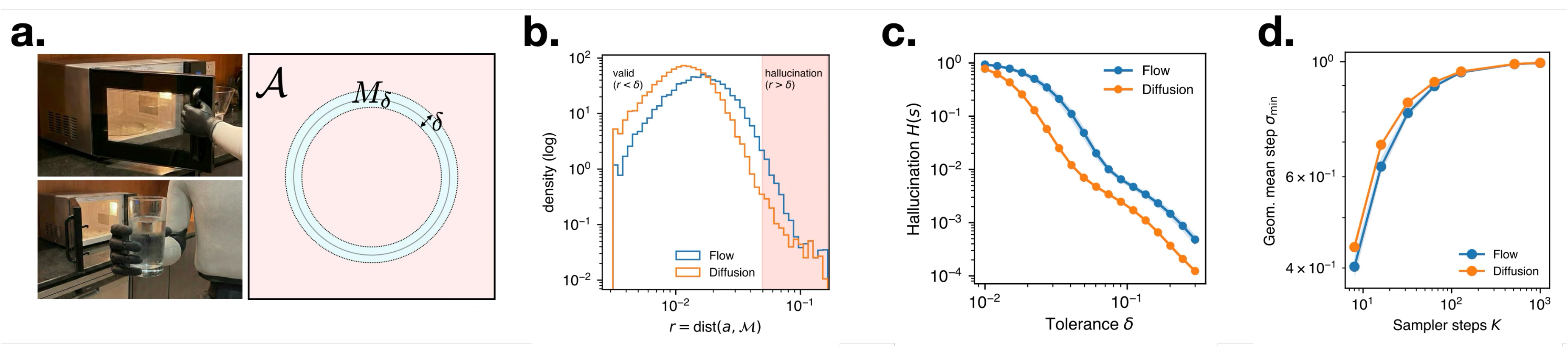}
    \caption{\small\textbf{Precision barrier for contact-rich tasks.}
    (\textbf{a}) Many manipulation tasks (e.g., grasping, peg-in-hole, handling tools / articulated / deformable objects) require high precision in that valid actions concentrate near a lower-dimensional feasible set. We model this as a $k$-dimensional manifold $\mathcal M\subset\mathcal A$ with tolerance tube $\mathcal M_\delta=\{a:\mathrm{dist}(a,\mathcal M)\le \delta\}$ (schematic).
    (\textbf{b}) Empirical distribution of distances $r=\mathrm{dist}(a,\mathcal M)$ for samples from flow matching and diffusion (log scales). The shaded region $r>\delta$ corresponds to action hallucinations. See Appendix \ref{app:prec_exps} for experiment details. 
    (\textbf{c}) Action hallucination rate $H(s;\delta)=\Pr[r>\delta]$ versus tolerance $\delta$ (log--log). Tightening tolerance sharply increases hallucinations, consistent with our precision barrier (Lemma~\ref{lem:density-barrier}) that shows maintaining low hallucination at small $\delta$ requires increasingly concentrated mass near $\mathcal M$.
    (\textbf{d}) The geometric mean of per-step minimum singular values
    increases toward $1$ as the number of sampler steps $K$ grows, indicating that the necessary overall contraction can be distributed across many mild refinement steps rather than a single severe collapse (Theorem~\ref{thm:precision-trilemma} and Corollary~\ref{cor:k-step-tradeoff}).}

    \label{fig:precision}
    \vspace{-1em}
\end{figure*}

\subsection{Precision Barrier for Contact Tasks}
\label{sec:codimension}

Let us now shift our attention from multiple modes to the complementary regime of precision and contact-rich tasks. %
Once contact occurs, rigid-body non-penetration, admissible contact modes, and practical limits such as maximum contact wrench induce additional state-dependent constraints on which actions are physically allowed. The set of admissible actions forms a thin tube around a lower-dimensional set in action space.
Here, we revisit this \emph{precision-barrier} from classical robotics~\cite{HsuLatombe2006} and we show that, as tolerance around the tube shrinks, any policy with a \emph{smooth, full-dimensional} action density is almost guaranteed to hallucinate actions (See Fig. \ref{fig:precision} for an overview). %

We model ideal contact configurations as a state-dependent $k$-dimensional submanifold $\M(s)$:

\begin{definition}
Let $\M(s) \subset \A$ be a compact $C^1$ submanifold of dimension $k < d$.
For $\delta > 0$, define the $\delta$-tube around $\M(s)$ by
 $\M_\delta(s) := \{ a \in \A : \operatorname{dist}(a,\M(s)) \le \delta \}$,
where $\operatorname{dist}(a,\M) = \inf_{x\in \M} \|a-x\|$.
For a state $s$, suppose the safe terminal actions satisfy
$\A_{\mathrm{safe}}(s;\delta) \subseteq \M_\delta(s)$ 
for a task-specific tolerance $\delta$. Define the $\delta$-tolerant success $S_\theta(s;\delta) := \Pr\left[a \in \A_{\mathrm{safe}}(s;\delta)\right]$, and hallucination probabilities, 
$H_\theta(s;\delta) := \Pr\left[a \notin \A_{\mathrm{safe}}(s;\delta)\right] = 1-S_\theta(s;\delta)$.
\label{def:precision}
\end{definition}

We fix a state $s$ and write $\M := \M(s)$ and $\M_\delta := \M_\delta(s)$. Assume that the dimensionality of the latent code $z$ is the same as the action, $m=d$ (as is the case in flow-matching and DDIM models). 
Below, we show that hitting a low-dimensional contact manifold requires the action distribution to concentrate its probability density.

\begin{restatable}[Precision Barrier]{lemma}{precisionbarrier}
\label{lem:density-barrier}
Fix a state $s$ and adopt Definition~\ref{def:precision}. 
Assume that the induced action law 
\(\mu_\theta(\cdot\mid s)\) is absolutely continuous with respect to
\(d\)-dimensional Lebesgue measure on \(\mathcal A\), with density
\(p(\cdot\mid s)\).
Then, there exist constants $C_\M>0$ and $\bar\delta>0$ (depending only on $\M$) such that for all $0<\delta\le \bar\delta$,
$S_\theta(s;\delta) \le C_\M \delta^{d-k}\cdot \esssup_{a\in \M_\delta} p(a\mid s)$,
and hence
\begin{align}
  H_\theta(s;\delta)
  \ge
  1 - C_\M\,\delta^{d-k}\cdot \esssup_{a\in \M_\delta} p(a\mid s).
\end{align}
\end{restatable}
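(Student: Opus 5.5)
The argument is a containment estimate followed by a volume estimate for the tube. Since $\A_{\mathrm{safe}}(s;\delta)\subseteq\M_\delta$ by Definition~\ref{def:precision}, monotonicity of the action law gives
\[
S_\theta(s;\delta)=\mu_\theta\bigl(\A_{\mathrm{safe}}(s;\delta)\mid s\bigr)\le \mu_\theta(\M_\delta\mid s)=\int_{\M_\delta} p(a\mid s)\,da .
\]
Bounding the integrand by its essential supremum over $\M_\delta$ then yields
\[
S_\theta(s;\delta)\le \Vol(\M_\delta)\cdot \esssup_{a\in\M_\delta}p(a\mid s).
\]
This is where absolute continuity is used: the essential supremum is taken with respect to Lebesgue measure, so null sets can be ignored. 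The statement about $H_\theta$ follows immediately from $H_\theta=1-S_\theta$. The only substantive step is a tube-volume bound $\Vol(\M_\delta)\le C_\M\delta^{d-k}$ for all $0<\delta\le\bar\delta$, with $C_\M$ and $\bar\delta$ depending only on $\M$.

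For the tube volume, I would avoid Weyl's tube formula, which needs $C^2$ regularity and a positive reach, and instead use a covering argument that works for compact $C^1$ manifolds.

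\textbf{Step 1: local graphs.} By compactness and the $C^1$ structure, cover $\M$ by finitely many charts. In each chart, after a rotation, $\M$ is locally the graph of a $C^1$ map $g:B^k(0,r)\to\R^{d-k}$ with $\|\nabla g\|\le 1$. Hence $\M$ is locally bi-Lipschitz to a flat $k$-disk with uniform constants.

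\textbf{Step 2: a packing bound.} From the graph structure I would show that there are $r_0>0$ and $c>0$ with $\mathcal H^k(\M\cap B(x,\rho))\ge c\rho^k$ for all $x\in\M$ and $\rho\le r_0$. Take a maximal $\delta$-separated set $\{x_j\}\subset\M$. The balls $B(x_j,\delta/2)$ are pairwise disjoint, so the number of points satisfies $N(\delta)\le \mathcal H^k(\M)/\bigl(c(\delta/2)^k\bigr)=C_1\delta^{-k}$.

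\textbf{Step 3: covering the tube.} Every $a\in\M_\delta$ lies within $\delta$ of some point of $\M$, which in turn lies within $\delta$ of some $x_j$. Therefore $\M_\delta\subseteq\bigcup_j B(x_j,2\delta)$, and
\[
\Vol(\M_\delta)\le N(\delta)\,\omega_d(2\delta)^d\le C_1\,\omega_d 2^d\,\delta^{d-k}.
\]
This gives $C_\M$ explicitly, with $\bar\delta=r_0$ (or $\min(r_0,1)$). Compactness guarantees $\mathcal H^k(\M)<\infty$. Boundary effects from $\A\ne\R^d$ only decrease the volume, so they cause no difficulty.

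The main obstacle is Step 2: obtaining the uniform lower density bound $\mathcal H^k(\M\cap B(x,\rho))\ge c\rho^k$ with constants independent of $x$ using only $C^1$ regularity. I would take a Lebesgue number of the finite chart cover. Within each chart, the projection of the graph onto $\R^k$ is $1$-Lipschitz, and the graph map is $\sqrt2$-Lipschitz. So $\M\cap B(x,\rho)$ contains the graph over a $k$-disk of radius $\rho/\sqrt2$, whose $\mathcal H^k$-measure is at least $\omega_k(\rho/\sqrt2)^k$. Compactness lets me take the minimum over the finitely many charts.

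Alternatively, I could cite Weyl's formula under a $C^2$ assumption. The covering route, however, matches the stated $C^1$ hypothesis.
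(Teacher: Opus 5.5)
Your argument is correct. Its first half is exactly the paper's: the containment $\A_{\mathrm{safe}}(s;\delta)\subseteq\M_\delta$, then bounding $\int_{\M_\delta}p$ by $\Vol(\M_\delta)$ times the essential supremum, then taking complements.

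The difference is in the tube-volume bound. The paper obtains Lemma~\ref{lem:tube} without proving it from scratch. It notes that a compact $C^1$ submanifold is $k$-rectifiable and invokes a Minkowski-content theorem, $\Vol(\M_\delta)/(\omega_{d-k}\delta^{d-k})\to\mathcal H^k(\M)$. It then reads off $C_\M=\omega_{d-k}(\mathcal H^k(\M)+1)$ and an unspecified $\bar\delta$ from the definition of the limit.

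Your packing/covering argument is elementary and self-contained. It uses only $\mathcal H^k(\M)<\infty$ and a uniform lower $k$-density bound. It therefore extends verbatim to compact manifolds with boundary, or to finite unions of such $k$-dimensional pieces. It also gives an explicit $\bar\delta$ tied to the chart radii.

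The price is sharpness. Your constant $2^{d+3k/2}\omega_d\,\mathcal H^k(\M)/\omega_k$ exceeds the asymptotically optimal $\omega_{d-k}\mathcal H^k(\M)$ by a factor that grows exponentially in $d$. This does not matter for the lemma, which only asserts that the constants exist. It affects the later step-count bound in Corollary~\ref{cor:k-step-tradeoff} only through $\log C_\M$. So the trade is a sharp constant on an inexplicit range (the paper) versus a crude constant on an explicit range (yours).

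One small simplification of your Step~2: you do not need a Lebesgue number. Cover $\M$ by the graph patches over the half-radius disks $B^k(0,r_i/2)$ of charts defined on $B^k(0,r_i)$. Then for $\rho\le\min_i r_i/\sqrt2$, the parameter disk of radius $\rho/\sqrt2$ about any base point stays inside its chart, which is all your $\sqrt2$-Lipschitz estimate needs.
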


\noindent
Lemma~\ref{lem:density-barrier} (proof in Appendix~\ref{app:precisionbarrier}) captures a source of action hallucination that is \emph{distinct} from the topological barrier in Section~\ref{sec:topological}. Even when the valid action set is connected (a single contact manifold), the policy needs to generate probability densities scaling as $O(\delta^{-(d-k)})$ inside the $\delta$-tube around~$\M$. This becomes more challenging as the tolerance $\delta$ shrinks or the codimension $(d-k)$ increases. 
The following theorem shows how a policy can supply this extreme density. 

\begin{restatable}[The Precision Trilemma: Collapse, Fold, or Hallucinate]{theorem}{trilemma}
\label{thm:precision-trilemma}
Let $m=d$ and let the action be generated by $a=F(z)$ where $Z \sim p_Z$.
Assume $F$ is $C^1$ on an open set $U \subseteq \Zspace$ containing the support of $p_Z$, with bounded latent density $\rho_Z(z) \le \rho_Z^{\max}$.
Define the active latent set as $Z_\delta := F^{-1}(\M_\delta) \cap U$.
Define local folding and local conditioning restricted to this active set: $N_\delta := \operatorname*{ess\,sup}_{a \in \M_\delta}\;
        \# \{z \in Z_\delta : F(z) = a\}$, and $\sigma_*(\delta) := \operatorname*{ess\,inf}_{z \in Z_\delta} \sigma_{\min}(J_F(z))$, 
where $J_F(z)$ is the Jacobian of $F$ and $\sigma_{\min}(J_F(z))$ its smallest singular value.
Assume $F$ is finite-to-one on $Z_\delta$ (i.e., $N_\delta < \infty$) and $\sigma_*(\delta)>0$.
Then the induced action distribution admits a density on $\M_\delta$ and the hallucination rate satisfies
$
  H_\theta(s;\delta)
  \;\ge\;
  1
  -
  C_\M\,\delta^{d-k}\;
  N_\delta\,\rho_Z^{\max}\,\sigma_*(\delta)^{-d}.
$
If the policy achieves a target hallucination level
$H_\theta(s;\delta)\le \eta$ for some $\eta\in(0,1)$, then the generator must satisfy
\begin{equation}
  \label{eq:tradeoff}
  \underbrace{N_\delta}_{\text{Fold}}
  \cdot
  \underbrace{\sigma_*(\delta)^{-d}}_{\text{Collapse}}
  \;\ge\;
  \frac{1-\eta}{C_\M \rho_Z^{\max}}
  \, \delta^{-(d-k)}.
\end{equation}
Hence, as precision tightens ($\delta \to 0$), any policy that maintains $H_\theta(s;\delta)\le \eta$ must either \emph{fold} space ($N_\delta \to \infty$) or \emph{collapse} volume locally ($\sigma_*(\delta) \to 0$).
\end{restatable}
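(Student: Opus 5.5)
The plan is to reduce the statement to Lemma~\ref{lem:density-barrier}: bound the essential supremum of the induced action density on $\M_\delta$ by $N_\delta\,\rho_Z^{\max}\,\sigma_*(\delta)^{-d}$, substitute, and rearrange.

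\textbf{Step 1: the pushforward has a density on $\M_\delta$, given by a change-of-variables sum.} On $Z_\delta$ we have $\sigma_{\min}(J_F(z))\ge\sigma_*(\delta)>0$ for almost every $z$, so $|\det J_F(z)|\ge\sigma_*(\delta)^d>0$ there. By the inverse function theorem, $F$ is then a local $C^1$ diffeomorphism near such points. I will apply the area formula (valid for $C^1$, hence locally Lipschitz, $F$ on $U$) to an arbitrary Borel set $B\subseteq\M_\delta$:
\begin{equation*}
\Pr[F(Z)\in B]=\int_{F^{-1}(B)\cap U}\rho_Z(z)\,dz=\int_B\sum_{z\in F^{-1}(a)\cap Z_\delta}\frac{\rho_Z(z)}{|\det J_F(z)|}\,da .
\end{equation*}
Points where $J_F$ is singular contribute nothing. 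Their image has measure zero by Sard's theorem, and the ess-inf assumption makes them a null set of $Z_\delta$ anyway. Hence the restriction of the action law to $\M_\delta$ is absolutely continuous, with density
\begin{equation*}
p(a\mid s)=\sum_{z\in F^{-1}(a)\cap Z_\delta}\frac{\rho_Z(z)}{|\det J_F(z)|}.
\end{equation*}

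\textbf{Step 2: bound that density.} The sum has at most $N_\delta$ terms for almost every $a\in\M_\delta$. Each term satisfies
\begin{equation*}
\frac{\rho_Z(z)}{|\det J_F(z)|}\le\frac{\rho_Z^{\max}}{\prod_i\sigma_i(J_F(z))}\le\rho_Z^{\max}\,\sigma_*(\delta)^{-d}.
\end{equation*}
Therefore $\esssup_{a\in\M_\delta}p(a\mid s)\le N_\delta\,\rho_Z^{\max}\,\sigma_*(\delta)^{-d}$.

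\textbf{Step 3: conclude.} Lemma~\ref{lem:density-barrier} only uses the density on $\M_\delta$. Indeed, its proof bounds $S_\theta\le\mu_\theta(\M_\delta)\le\Vol(\M_\delta)\cdot\esssup_{\M_\delta}p$ with $\Vol(\M_\delta)\le C_\M\delta^{d-k}$, so the global absolute-continuity hypothesis can be weakened to absolute continuity on $\M_\delta$. Substituting the Step~2 bound gives the stated lower bound on $H_\theta(s;\delta)$ for $\delta\le\bar\delta$. For the tradeoff, suppose $H_\theta\le\eta$. Then
\begin{equation*}
1-\eta\le S_\theta\le C_\M\,\delta^{d-k}\,N_\delta\,\rho_Z^{\max}\,\sigma_*(\delta)^{-d},
\end{equation*}
and dividing by $C_\M\rho_Z^{\max}\delta^{d-k}$ gives \eqref{eq:tradeoff}. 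The final sentence follows because the right-hand side of \eqref{eq:tradeoff} diverges as $\delta\to0$ (since $d>k$), so the product $N_\delta\,\sigma_*(\delta)^{-d}$ must diverge too.

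\textbf{Main obstacle.} The delicate step is Step~1: justifying the area formula with multiplicities on the possibly noncompact set $Z_\delta$, and handling the essential infimum/supremum qualifiers. I would first partition $Z_\delta\cap\{\det J_F\ne0\}$ into countably many Borel pieces on each of which $F$ is injective; this is possible because the set is covered by open sets on which $F$ is a diffeomorphism, and second countability gives a countable subcover. On each piece I would apply the ordinary change of variables and then sum. The null sets excluded by the ess-inf/ess-sup definitions then only affect the density on a Lebesgue-null set of actions, which does not change the essential supremum.
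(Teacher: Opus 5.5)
Your proposal is correct and follows the paper's proof essentially step for step. The area formula writes the density on $\M_\delta$ as a sum over preimages in $Z_\delta$; each term is bounded by $\rho_Z^{\max}\sigma_*(\delta)^{-d}$ and the number of terms by $N_\delta$; the resulting bound on $\esssup_{\M_\delta} p$ is then substituted into Lemma~\ref{lem:density-barrier} and rearranged. Two of your additions tighten points the paper passes over quickly: your handling of the null sets in the ess-inf/ess-sup definitions, and your remark that Lemma~\ref{lem:density-barrier} needs absolute continuity only on $\M_\delta$ rather than globally.
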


Theorem~\ref{thm:precision-trilemma} (proof in Appendix~\ref{app:trilemma}) shows that near the tube, the induced action density satisfies $p(a\mid s)\ \lesssim\ N_\delta \cdot \rho_Z^{\max}\cdot \sigma_*(\delta)^{-d}$ so the two architectural mechanisms to increase density are:
(i) \textbf{Collapse} {(Jacobian contraction):} make $\sigma_*(\delta)$ small so $F$ compresses latent volume into a tiny region of action space; or 
(ii) \textbf{Fold} {(many-to-one decoding):} make $N_\delta$ large so that many disjoint latent regions decode to the same actions.
If neither occurs, the policy cannot place enough mass in the tube and will hallucinate actions. In short, as $\delta\to 0$, the generative policy must either collapse, fold, or hallucinate.

Modern VLAs often use flow-matching to generate actions. In theory, these are diffeomorphic generators with invertible steps so folding is not possible ($N_\delta=1$) and meeting the density requirement at tight tolerances {requires} {Jacobian collapse}.
These methods apply a sequence of ``refinement'' (or denoising) steps and %
Corollary~\ref{cor:k-step-tradeoff} below (proof in Appendix~\ref{app:subsec:kstep-tradeoff}) shows that to maintain a fixed hallucination rate at increasingly tight tolerances, a flow-like policy must either accumulate sufficient latent-space contraction across its $K$ refinement steps or allow individual steps to become increasingly ill-conditioned.

\begin{restatable}[$K$-step precision tradeoff]{corollary}{ksteptradeoff}
\label{cor:k-step-tradeoff}
Assume the decoder is a $K$-step composition
$F^{(K)}=\Phi_{K-1}\circ\cdots\circ\Phi_{0}$,
where each $\Phi_t$ is a $C^1$ diffeomorphism and there exist constants $\lambda_t>0$ such that
$\sigma_{\min}(J\Phi_t(z))\ge \lambda_t$ for all $z$.
Let 
$\sigma_*(\delta)\ \ge\ \inf_{z\in Z_\delta}\sigma_{\min}(JF^{(K)}(z))
 \ge \prod_{t=0}^{K-1}\lambda_t$.
Then, for all $0<\delta\le\bar\delta$,
$H_\theta^{(K)}(s;\delta) \ge\ 1 - C_\M\rho_Z^{\max}\left(\prod_{t=0}^{K-1}\lambda_t\right)^{-d}\delta^{d-k}.$ 
In particular, if $\lambda_t\ge \lambda\in(0,1)$ for all $t$, then 
a necessary condition to bound the hallucination rate $H_\theta^{(K)}(s;\delta) \leq \eta\in(0,1)$ is
$K\ \ge\ \frac{1}{d\log(1/\lambda)}\left[(d-k)\log\frac{1}{\delta} + \log\frac{1-\eta}{C_\M\rho_Z^{\max}}\right]$. 
\end{restatable}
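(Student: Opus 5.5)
The plan is to specialize Theorem~\ref{thm:precision-trilemma} to the composed decoder $F^{(K)}$ and then lower-bound its conditioning by a product of per-step bounds. First I would check that the trilemma's hypotheses hold for $F=F^{(K)}$. Each $\Phi_t$ is a $C^1$ diffeomorphism, so the composition $F^{(K)}$ is a $C^1$ diffeomorphism onto its image. In particular it is injective, so every fiber $\{z\in Z_\delta: F^{(K)}(z)=a\}$ has at most one element and $N_\delta\le 1$. (If $\M_\delta$ misses the image of $F^{(K)}$, then $S^{(K)}_\theta(s;\delta)=0$ and the bound holds trivially.)

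Next comes the conditioning estimate. By the chain rule, $JF^{(K)}(z)=J\Phi_{K-1}(z_{K-1})\cdots J\Phi_0(z_0)$, where $z_0=z$ and $z_{t+1}=\Phi_t(z_t)$. I would use the submultiplicativity of the smallest singular value for square matrices, $\sigma_{\min}(AB)\ge\sigma_{\min}(A)\sigma_{\min}(B)$, which follows from $\|ABv\|\ge\sigma_{\min}(A)\|Bv\|\ge\sigma_{\min}(A)\sigma_{\min}(B)\|v\|$. Inducting over the $K$ factors gives $\sigma_{\min}(JF^{(K)}(z))\ge\prod_t\lambda_t>0$ for every $z$, hence also on $Z_\delta$. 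Taking the essential infimum yields $\sigma_*(\delta)\ge\prod_t\lambda_t$, which is the displayed chain in the statement. This also verifies the trilemma's hypothesis $\sigma_*(\delta)>0$.

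I then substitute $N_\delta\le 1$ and $\sigma_*(\delta)^{-d}\le(\prod_t\lambda_t)^{-d}$ into the hallucination bound of Theorem~\ref{thm:precision-trilemma}. That bound is monotone decreasing in $N_\delta\sigma_*^{-d}$, so the substitution gives $H^{(K)}_\theta(s;\delta)\ge 1-C_\M\rho_Z^{\max}(\prod_t\lambda_t)^{-d}\delta^{d-k}$ for $0<\delta\le\bar\delta$. Here $C_\M$ and $\bar\delta$ are the constants from Lemma~\ref{lem:density-barrier}, so they are inherited unchanged.

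For the necessary condition, assume $\lambda_t\ge\lambda\in(0,1)$. Then $(\prod_t\lambda_t)^{-d}\le\lambda^{-dK}$. Imposing $H^{(K)}_\theta\le\eta$ on the bound gives $1-\eta\le C_\M\rho_Z^{\max}\lambda^{-dK}\delta^{d-k}$. Taking logarithms yields $dK\log(1/\lambda)\ge(d-k)\log(1/\delta)+\log\frac{1-\eta}{C_\M\rho_Z^{\max}}$. Dividing by $d\log(1/\lambda)>0$, which is positive because $\lambda<1$, gives the claimed lower bound on $K$.

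The main obstacle is the bookkeeping in the singular-value step. The uniform per-step bound $\lambda_t$ must hold along the trajectory $z_t$, which is why the hypothesis is stated for all $z$. The trilemma's essential infimum over $Z_\delta$ must then be dominated by the pointwise product. Everything else is direct substitution.
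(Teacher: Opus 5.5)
Your proposal is correct and follows the paper's proof essentially step for step: injectivity of the composed diffeomorphism gives $N_\delta\le 1$, the chain rule with $\sigma_{\min}(AB)\ge\sigma_{\min}(A)\sigma_{\min}(B)$ gives $\sigma_*(\delta)\ge\prod_{t}\lambda_t$, and substituting into Theorem~\ref{thm:precision-trilemma} and taking logarithms yields both claims. Your extra care with the empty-fiber case, the monotonicity of the bound in $N_\delta\,\sigma_*(\delta)^{-d}$, and the positivity of $\log(1/\lambda)$ fills in details the paper leaves implicit.
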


\mypara{Implications for VLAs} In this regime of contact-rich tasks, a policy \emph{should} concentrate probability mass onto the safe action manifold rather than preserve spurious variance that gives rise to action hallucinations. However, there is a cost. Theorem~\ref{thm:precision-trilemma} shows that achieving this requires either {Jacobian collapse} or {folding}. 
Both make the latent space a poor coordinate system for optimization. %
A possible path around the precision barrier is to \emph{project} the VLA's proposed action back onto the task’s feasible contact manifold (e.g., via a low-level controller, or external sensing like tactile feedback) or \emph{parameterize} the manifold (e.g., an atlas of local charts~\cite{jaillet2012path} or kinematically-valid trajectories~\cite{nvidia2026alpamayo}) so decoded actions stay on‑manifold by construction. 

Corollary~\ref{cor:k-step-tradeoff} suggests the iterative nature of modern flow and diffusion policies is not solely about sampling from complex action distributions, but rather, about decomposing an effectively singular projection into a product of gentle, well-behaved steps. 
The dependence on $\delta$ enters only through a $\log(1/\delta)$ factor, so the required number of refinement steps grows only logarithmically with tolerance.
Interestingly, very recent large-scale empirical evidence~\cite{pan2025adonoisingdispellingmyths} on  benchmarks supports this view and suggests that the advantage of diffusion/flow-style policies on challenging contact-rich tasks is strongly associated with iterative computation and  stochasticity injection, rather than distribution learning. Our theory serves to explain these empirical findings. %

\section{Long-Horizon Plan Hallucinations}
\label{sec:planning}

This section builds on the previous analysis of one-step action generation and extends it to long-horizon goal achievement. We now move one layer up to the VLM/VLA planner and develop a theoretical framework that complements empirical findings showing that VLMs/VLAs suffer sharp performance degradation on long-horizon tasks (e.g., ~\cite{dai2025,Zhang2025ICCV,yang2025embodiedbench,kwok2025robomonkey}) and recent explorations into test-time compute (e.g., ~\cite{kwok2025robomonkey,yoon2025monte}). 

We begin by fixing a task instance $I$ with deterministic dynamics $\mathcal{T}$ and defining time-bounded backward-reachable sets, $\Sigma_0 := \G$,
$
\Sigma_t := \left\{ s\in\Sspace : \exists a\in\A \ \text{s.t.}\ f_{\mathrm{phys}}(s,a)=1 \,\, \text{and} \,\, \mathcal{T}(s,a)\in \Sigma_{t-1}\right\}$ for $t\ge 1.$
Thus, $s\in\Sigma_t$ means that from state $s$ there exists a physically-valid action that lands in a state from which the goal is reachable in $t$ further steps. %
For a state $s$ and remaining horizon $t\ge 1$ define the \emph{progress set},
$\A_{\mathrm{prog}}(s,t) :=
\left\{ a\in\A:\ f_{\mathrm{phys}}(s,a)=1 \ \text{and}\ \mathcal{T}(s,a)\in\Sigma_{t-1}\right\}$. 
By construction, $s\in\Sigma_t$ if and only if $\A_{\mathrm{prog}}(s,t)\neq\emptyset$, and  $\A_{\mathrm{prog}}(s,t)\subseteq \A_{\mathrm{safe}}(s)$, i.e., progress actions are a subset of safe actions. %

\begin{restatable}[Horizon Barrier]{lemma}{horizonbarrier}
\label{lem:horizon-barrier}
Let $(S_t)_{t=0}^{T}$ be the rollout from $s_0$. Define
$p_t(s)\ :=\ \Pr_{Z\sim p_Z}\big[\pi_\theta(s,Z)\in \A_{\mathrm{prog}}(s,t)\big]$ and $\gamma_t\ :=\ \sup_{s\in\Sigma_t} p_t(s)$.
Then the rollout success probability factorizes and is upper bounded by 
 $\Pr\left[f_{\mathrm{plan}}(s_0,\tau_\theta(s_0,Z_{0:T-1}))=1\right] 
  \le \prod_{t=1}^{T}\gamma_t.$ Hence, $H^{\mathrm{plan}}_{\pi_\theta}(I)  \ge\ 1-\prod_{t=1}^{T}\gamma_t$.
In particular, if $\gamma_t\le \gamma<1$ for all $t$, then
$\Pr[f_{\mathrm{plan}}(s_0,\tau_\theta)=1]\le \gamma^T$
and
$H^{\mathrm{plan}}_{\pi_\theta}(I)\ge 1-\gamma^T$.
\end{restatable}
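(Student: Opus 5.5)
The plan is to show that success forces every visited state to lie in the right backward-reachable set, and then to peel off one step at a time using the Markov structure of the rollout. Let $A_t := \pi_\theta(S_t, Z_t)$ be the executed actions. The first step is a deterministic claim: on the success event $\{f_{\mathrm{plan}}=1\}$ we have
\[
A_t \in \A_{\mathrm{prog}}(S_t, T-t) \quad \text{for every } t=0,\ldots,T-1,
\]
and consequently $S_t \in \Sigma_{T-t}$.

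This claim follows by backward induction from $S_T\in\G=\Sigma_0$. Suppose $S_{t+1}\in\Sigma_{T-t-1}$. Since $f_{\mathrm{phys}}(S_t,A_t)=1$ and $\mathcal T(S_t,A_t)=S_{t+1}\in\Sigma_{T-t-1}$, the definition of the progress set gives $A_t\in\A_{\mathrm{prog}}(S_t,T-t)$. That progress set is then nonempty, which means $S_t\in\Sigma_{T-t}$. The success event is therefore contained in the event $E := \bigcap_{t=0}^{T-1} E_t$, where $E_t := \{A_t\in\A_{\mathrm{prog}}(S_t,T-t)\}$.

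Next I would bound $\Pr[E]$ by conditioning sequentially. Let $\mathcal F_t=\sigma(Z_0,\ldots,Z_{t-1})$. Because the dynamics are deterministic and the policy is in the oracle-state form, $S_t$ is $\mathcal F_t$-measurable. The latent $Z_t$ is independent of $\mathcal F_t$, so
\[
\Pr[E_t\mid\mathcal F_t] = p_{T-t}(S_t).
\]
On $E_0\cap\cdots\cap E_{t-1}$ we have $S_t\in\Sigma_{T-t}$: for $t\ge1$ this comes from $A_{t-1}\in\A_{\mathrm{prog}}(S_{t-1},T-t+1)$, and for $t=0$ it holds since $s_0\in\Sigma_T$, which we may assume because otherwise the success probability is $0$. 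Hence $p_{T-t}(S_t)\le\gamma_{T-t}$ on that event.

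The tower property then gives
\[
\Pr\Big[\bigcap_{j\le t} E_j\Big] = \E\Big[\mathbf 1\Big\{\bigcap_{j<t}E_j\Big\}\,\Pr[E_t\mid\mathcal F_t]\Big] \le \gamma_{T-t}\,\Pr\Big[\bigcap_{j<t}E_j\Big].
\]
Iterating over $t=0,\ldots,T-1$ yields $\Pr[\text{success}]\le\prod_{t=0}^{T-1}\gamma_{T-t}=\prod_{t=1}^{T}\gamma_t$. This is the ``factorization'' in the statement, understood as a product of conditional probabilities, and it is bounded stepwise.

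The plan-hallucination bound follows by complement, since $H^{\mathrm{plan}} = 1-\Pr[\text{success}]$. The uniform case is immediate: if $\gamma_t\le\gamma$ for all $t$, the product is at most $\gamma^T$.

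I expect the main obstacle to be measure-theoretic bookkeeping rather than any substantive difficulty. We need $p_t(\cdot)$ to be measurable, or else we must work with outer probabilities and the supremum $\gamma_t$, which sidesteps the need for measurability of $p_t(S_t)$. We also need the index alignment between the remaining horizon $T-t$ and the definition of $\gamma$ to be correct. In the general context-based (non-oracle) setting, the same argument goes through with $S_t$ replaced by the history $h_t$, which is still $\mathcal F_t$-measurable.
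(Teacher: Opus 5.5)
Your proposal is correct and follows essentially the same route as the paper: define the per-step progress events $E_t=\{\pi_\theta(S_t,Z_t)\in\A_{\mathrm{prog}}(S_t,T-t)\}$, identify success with their intersection, and bound each sequential conditional factor by the corresponding $\gamma$, since the current state must lie in the matching backward-reachable set $\Sigma$. The differences are cosmetic: you use only containment rather than the paper's ``if and only if,'' and your filtration/tower-property step makes explicit two things the paper's chain-rule argument leaves implicit, namely the independence of $Z_t$ from the past and the $s_0\in\Sigma_T$ base case.
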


Lemma~\ref{lem:horizon-barrier} (proof in Appendix~\ref{app:horizon-barrier-proof}) formalizes the horizon barrier as a \emph{compounding reachability bottleneck}. The idea is intuitive and well-known in robotics~\cite{ross2010efficient}: a length-$T$ valid rollout requires sampling a progress action at every step, so success is a product of {conditional} progress factors. Even if each $\gamma_t$ is only slightly below one, their product decays rapidly with horizon. This connects directly to the topological and precision barriers 
since both reduce per-step progress probability and the deficit compounds with $T$ (also see the discussion in Appendix~\ref{app:chunking} on action chunking).

Now suppose that a VLA can expend \emph{test-time compute} to \emph{propose} and \emph{verify}
candidates or continuations, as in sampling, beam/tree search, or using simulator/world-model
rollouts. Let $\mathcal P_I$ be
a measurable space of complete grounded candidate continuations for instance
$I$; each $\tau\in\mathcal P_I$ induces an action sequence, so
$f_{\mathrm{plan}}(s_0,\tau;\G)$ is well-defined. Define the valid set
$
        V:=\{\tau\in\mathcal P_I:\ f_{\mathrm{plan}}(s_0,\tau;\G)=1\}.
$
We study a \emph{verification-guided planner} that runs for at most $q$ rounds. At round $j$, based on
its search history, it chooses a proposal distribution $Q_j$ over $\mathcal P_I$,
samples $\tau_j\sim Q_j$, queries a randomized verifier $\tilde f_j(\tau_j)$, and
outputs the first accepted candidate; if no candidate is accepted in $q$ rounds,
it abstains and outputs $\bot$.
We assume verifier errors are uniformly bounded: given any history and proposed $\tau$, invalid candidates are accepted with probability at most $\varepsilon_{\mathrm{fp}}$,
and valid candidates are accepted with probability at least
$c:=1-\varepsilon_{\mathrm{fn}}$.

Define hallucination and abstention probabilities, 
$H(I):=\Pr[\hat\tau\neq\bot\ \wedge\ \hat\tau\notin V]$ and 
$A(I):=\Pr[\hat\tau=\bot]$.
The valid-output probability is $S(I)=1-A(I)-H(I)$. Plan 
hallucination here differs slightly from Lemma~\ref{lem:horizon-barrier} because the
planner may abstain. We focus on the design of $(\alpha,\beta)$-reliable VLAs: 

\begin{definition}[$(\alpha,\beta)$-reliability]
\label{def:alpha-beta-reliability}
Fix $\alpha,\beta\in[0,1]$ with $\alpha+\beta\le 1$. We say the planner is
\emph{$(\alpha,\beta)$-reliable} on instance $I$ if $H(I)\le\alpha$ and
$A(I)\le\beta$. Hence, an $(\alpha,\beta)$-reliable planner outputs a valid plan
with probability at least $S(I)\ge 1-\alpha-\beta$.
\end{definition}

Let $C_{j-1}$ be the event that the planner reaches round $j$. Define the
round-$j$ \emph{conditional valid mass} by
$
        \rho_j:=\Pr[\tau_j\in V\mid C_{j-1}]
        =\E[Q_j(V)\mid C_{j-1}],
$
 with the convention $\rho_j=0$ if $\Pr[C_{j-1}]=0$. We refer to
$(\rho_j)_{j=1}^q$ as the planner's \emph{amplification schedule}.

\begin{restatable}[Reliability search-budget window]{lemma}{realwindow}
\label{thm:amplification-search-reliability}
Consider any verification-guided planner with budget $q$, verifier error rates
$(\varepsilon_{\mathrm{fp}},\varepsilon_{\mathrm{fn}})$, and amplification schedule
$(\rho_j)_{j=1}^q$. Let $c:=1-\varepsilon_{\mathrm{fn}}$. Then
$
H(I)\le
\varepsilon_{\mathrm{fp}}
\sum_{j=1}^q \Pr[C_{j-1}](1-\rho_j)
\le q\varepsilon_{\mathrm{fp}}
$
and
$
A(I)\le \prod_{j=1}^q(1-c\rho_j).
$
Consequently, the planner is certified as $(\alpha,\beta)$-reliable whenever
$
\varepsilon_{\mathrm{fp}}
\sum_{j=1}^q \Pr[C_{j-1}](1-\rho_j)\le \alpha$ and 
$\prod_{j=1}^q(1-c\rho_j)\le\beta.
$
The simpler sufficient certificate
$q\varepsilon_{\mathrm{fp}}\le\alpha$ and
$\prod_{j=1}^q(1-c\rho_j)\le\beta$ also certifies
$(\alpha,\beta)$-reliability.
\end{restatable}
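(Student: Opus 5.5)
The plan is to decompose the planner's run round by round, using the events $C_{j-1}$ (reach round $j$) together with the events $\mathrm{Acc}_j$ (the candidate $\tau_j$ is accepted at round $j$). We have $C_0$ as the sure event and $C_j = C_{j-1}\cap \mathrm{Acc}_j^c$. The output is $\hat\tau=\tau_j$ exactly on $C_{j-1}\cap \mathrm{Acc}_j$, and $\hat\tau=\bot$ exactly on $C_q$. Hence
\[
H(I)=\sum_{j=1}^q \Pr[C_{j-1}\cap\{\tau_j\notin V\}\cap \mathrm{Acc}_j],\qquad A(I)=\Pr[C_q].
\]
I will bound each piece by conditioning on the history $\mathcal F_{j-1}$ up to round $j$ together with $\tau_j$. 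The verifier assumption says that, given any history and proposed $\tau$, $\Pr[\mathrm{Acc}_j\mid \mathcal F_{j-1},\tau_j]\le\varepsilon_{\mathrm{fp}}$ on $\{\tau_j\notin V\}$ and $\ge c$ on $\{\tau_j\in V\}$. Since $C_{j-1}$ is $\mathcal F_{j-1}$-measurable, the tower property is available.

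For the hallucination term, the $j$-th summand is at most $\varepsilon_{\mathrm{fp}}\Pr[C_{j-1}\cap\{\tau_j\notin V\}]$. This equals $\varepsilon_{\mathrm{fp}}\Pr[C_{j-1}](1-\rho_j)$ by the definition $\rho_j=\Pr[\tau_j\in V\mid C_{j-1}]$; when $\Pr[C_{j-1}]=0$ both sides vanish, consistent with the convention. Summing gives the first inequality, and bounding $\Pr[C_{j-1}](1-\rho_j)\le 1$ gives $q\varepsilon_{\mathrm{fp}}$.

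For abstention, I will show $\Pr[C_j]\le \Pr[C_{j-1}](1-c\rho_j)$ and then induct. Write
\[
\Pr[C_{j-1}\cap \mathrm{Acc}_j]\ge \Pr[C_{j-1}\cap\{\tau_j\in V\}\cap\mathrm{Acc}_j]\ge c\,\Pr[C_{j-1}\cap\{\tau_j\in V\}]=c\rho_j\Pr[C_{j-1}],
\]
so $\Pr[C_j]=\Pr[C_{j-1}]-\Pr[C_{j-1}\cap\mathrm{Acc}_j]\le \Pr[C_{j-1}](1-c\rho_j)$. Because $1-c\rho_j\ge 0$, iterating from $\Pr[C_0]=1$ yields $A(I)=\Pr[C_q]\le\prod_{j}(1-c\rho_j)$.

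The reliability certificates then follow immediately from Definition~\ref{def:alpha-beta-reliability}. The only delicate point is the measure-theoretic care that makes the verifier's uniform error bounds valid conditionally on adaptive histories. I would handle this by formalizing the bound as a statement about the conditional acceptance probability given $(\mathcal F_{j-1},\tau_j)$, which is exactly what "given any history and proposed $\tau$" asserts.
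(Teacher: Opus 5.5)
Your proposal is correct and follows essentially the same route as the paper. Both arguments split $H(I)$ into per-round false-acceptance events and bound each one by conditioning on the adaptive history and the proposed candidate, giving $\varepsilon_{\mathrm{fp}}\Pr[C_{j-1}](1-\rho_j)$; both then lower-bound the true-acceptance mass to get the recursion $\Pr[C_j]\le\Pr[C_{j-1}](1-c\rho_j)$ and iterate it. You also note explicitly that $1-c\rho_j\ge 0$ is what lets the recursion be iterated, which the paper leaves implicit.
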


Lemma~\ref{thm:amplification-search-reliability} separates verifier-limited and
planner-limited effects. Each query creates a chance to accept a valid plan, but also increases the worst-case (upper-bounded) cumulative risk of a hallucination. Thus test-time compute is useful only if the proposal mechanism increases $\rho_j$ fast enough before the certified hallucination budget is exhausted. A natural follow-up question is how fast a planner needs to grow $\rho_j$? The following theorem provides an answer for long horizon tasks.

\begin{restatable}[Amplification-rate to beat the horizon barrier]{theorem}{amprate}
\label{thm:amplification-threshold}
Consider the planner above, with amplification schedule $(\rho_j)_{j=1}^q$. 
Any
$(\alpha,\beta)$-reliable planner must satisfy 
$
        \sum_{j=1}^q \Pr[C_{j-1}]\rho_j\ge 1-\alpha-\beta,
$
Assume an exponentially small initial valid mass
$\rho_1\le\gamma^T$ for some $\gamma\in(0,1)$. Then \textup{(i)} \textbf{Polynomial amplification is insufficient.}
If $\rho_j\le \rho_1j^p$ for some fixed $p\ge0$, then meeting the necessary
condition requires
$
q\ge
\left(1+\frac{(p+1)(1-\alpha-\beta)}{\gamma^T}\right)^{1/(p+1)}-1. 
$
For fixed $p$, $\gamma\in(0,1)$, and nontrivial reliability target
$1-\alpha-\beta>0$, this lower bound grows exponentially in $T$; \textup{(ii)} \textbf{Geometric amplification is sufficient, if within the verifier budget.}
Assume $c:=1-\varepsilon_{\mathrm{fn}}>0$, $\beta\in(0,1)$, and
$\rho_j\ge \min\{1,\rho_1r^{j-1}\}$ for some $r>1$. Then
$
q\varepsilon_{\mathrm{fp}}\le\alpha$ and 
$\sum_{j=1}^q \min\{1,\rho_1r^{j-1}\}
    \ge c^{-1}\log(1/\beta)
$
certify $(\alpha,\beta)$-reliability. In the unsaturated regime
$\rho_1r^{q-1}\le1$, the second condition becomes
$
        \rho_1\frac{r^q-1}{r-1}\ge c^{-1}\log(1/\beta)
$ 
and it suffices that
$
q\ge
\log_r\!\left(1+\frac{(r-1)c^{-1}\log(1/\beta)}{\rho_1}\right).
$
Thus, for fixed $r>1$, $c>0$, and $\beta\in(0,1)$, the geometric
cumulative-mass condition is achieved at $q=\Theta(\log 1/\rho_1)$ rounds. If $\rho_1\asymp\gamma^T$, this becomes  $q=\Theta(T)$.
\end{restatable}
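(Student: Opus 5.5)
The plan is to prove the necessary condition first and then derive (i) and (ii) from it and from Lemma~\ref{thm:amplification-search-reliability}.

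\textbf{Necessary condition.} A valid output is produced only if, at some round $j$ reached by the planner, the proposal $\tau_j$ lies in $V$ and is accepted. A union bound over rounds gives
\[
S(I)\le\sum_{j=1}^q\Pr[C_{j-1}\wedge \tau_j\in V]=\sum_{j=1}^q\Pr[C_{j-1}]\rho_j ,
\]
where the equality is just the definition of $\rho_j$ as a conditional probability (with the convention $\rho_j=0$ covering the case $\Pr[C_{j-1}]=0$). Since $(\alpha,\beta)$-reliability forces $S(I)=1-A(I)-H(I)\ge 1-\alpha-\beta$, the claimed inequality follows.

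\textbf{Part (i).} Use $\Pr[C_{j-1}]\le 1$ and $\rho_j\le\gamma^T j^p$, so the necessary condition implies $\gamma^T\sum_{j=1}^q j^p\ge 1-\alpha-\beta$. Bound the sum by an integral: since $j^p\le\int_j^{j+1}x^p\,dx$ for $p\ge 0$,
\[
\sum_{j=1}^q j^p\le\int_1^{q+1}x^p\,dx\le\frac{(q+1)^{p+1}}{p+1}.
\]
This gives $(q+1)^{p+1}\ge (p+1)(1-\alpha-\beta)/\gamma^T$. To match the stated form exactly, with its extra "$1+$" inside, I would instead use $\sum_{j=1}^q j^p\le\frac{(q+1)^{p+1}-1}{p+1}$, which follows from $\int_0^{q+1}$ after treating the $j=1$ term carefully. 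If that tighter bound is awkward, I would keep the slightly weaker form, which grows at the same rate. Either way, $\gamma^{-T}$ grows exponentially in $T$, and so $q\gtrsim \gamma^{-T/(p+1)}$ does too.

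\textbf{Part (ii).} Take $\alpha$ and $\beta$ as given and apply Lemma~\ref{thm:amplification-search-reliability}. Hallucination is handled directly: $H\le q\varepsilon_{\mathrm{fp}}\le\alpha$. For abstention, the lemma gives $A\le\prod_j(1-c\rho_j)$. Using $1-x\le e^{-x}$ and the monotonicity in $\rho_j$,
\[
A\le\exp\Big(-c\sum_j\min\{1,\rho_1 r^{j-1}\}\Big)\le\beta
\]
whenever the sum condition holds. Note that the schedule bound $\rho_j\ge\min\{1,\rho_1 r^{j-1}\}$ makes $\rho_j$ large, which is exactly the direction needed.

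In the unsaturated regime, the sum is the geometric series $\rho_1\frac{r^q-1}{r-1}$. Solving $\rho_1\frac{r^q-1}{r-1}\ge c^{-1}\log(1/\beta)$ for $q$ gives the stated $\log_r$ threshold. That threshold is $\log_r(1/\rho_1)+O(1)$, so $q=\Theta(\log 1/\rho_1)$. If instead saturation occurs earlier, the $\min$ terms equal 1 from then on and the condition is reached even faster. Finally, substituting $\rho_1\asymp\gamma^T$ gives $q=\Theta(T\log(1/\gamma))=\Theta(T)$.

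\textbf{Main obstacle.} I expect the delicate step to be the first one: justifying $\Pr[C_{j-1}\wedge\tau_j\in V]=\Pr[C_{j-1}]\rho_j$ when the proposals $Q_j$ are adaptive and the verifier is randomized. This needs care with conditioning on the search history. Everything else is elementary summation.
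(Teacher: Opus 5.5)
Your route is the paper's: the same union bound $S(I)\le\sum_j\Pr[C_{j-1}]\rho_j$, the same integral comparison in (i), and the same use of Lemma~\ref{thm:amplification-search-reliability} with $1-x\le e^{-x}$ in (ii). The step you flag as the main obstacle is not delicate. $\Pr[C_{j-1}\wedge\tau_j\in V]=\Pr[C_{j-1}]\rho_j$ is just the definition $\rho_j=\Pr[\tau_j\in V\mid C_{j-1}]$. Adaptivity matters only for $\rho_j=\E[Q_j(V)\mid C_{j-1}]$, which is used inside the lemma, not here. In (i) you derived the exact bound and then discarded it. Since $\int_1^{q+1}x^p\,dx=\frac{(q+1)^{p+1}-1}{p+1}$ (the paper's $\int_0^q(x+1)^p\,dx$), keeping the $-1$ gives the displayed inequality directly. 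Your $\int_0^{q+1}$ remark is off, because that integral carries no $-1$. Your fallback $(q+1)^{p+1}\ge(p+1)(1-\alpha-\beta)\gamma^{-T}$ is strictly weaker than the statement, so it would not prove it as written.

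The real gap is in the $\Theta(\log 1/\rho_1)$ claim of (ii). You justify it only when the $\log_r$ threshold lands in the unsaturated regime, and your sentence about the other case points the wrong way. Write $L:=c^{-1}\log(1/\beta)$. The real threshold $q_0$ satisfies $\rho_1r^{q_0-1}=(\rho_1+(r-1)L)/r$, so it is unsaturated only if $\rho_1+(r-1)L\le r$. For fixed $L>r/(r-1)$ (e.g.\ $r=2$, $c=1$, $\beta=0.05$) this fails for every small $\rho_1$, so saturation is the typical case. There, $\min\{1,\rho_1r^{j-1}\}\le\rho_1r^{j-1}$ makes the true sum \emph{smaller} than $\rho_1\frac{r^q-1}{r-1}$. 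The threshold formula therefore underestimates the required $q$; the condition is not reached ``even faster.'' The fix is the paper's explicit count. All terms with $j\ge\lceil\log_r(1/\rho_1)\rceil+1$ equal $1$, so $q=\lceil\log_r(1/\rho_1)\rceil+\lceil L\rceil+1$ suffices, which is $O(\log 1/\rho_1)$ for fixed $L$. The $\Omega$ direction also needs its own line. The paper notes that $q\le\frac12\log_r(1/\rho_1)$ forces the sum below $\rho_1^{1/2}/(r-1)\to0<L$. Alternatively, a minimal saturated $q$ already exceeds $1+\log_r(1/\rho_1)$, and a minimal unsaturated one is at least $q_0\ge\log_r(1/\rho_1)+\log_r((r-1)L)$.
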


Theorem~\ref{thm:amplification-threshold} highlights a certificate-level
feasibility constraint. When the horizon barrier makes the initial valid mass
small, amplification that grows only polynomially with the number of rounds (e.g., simple accept/reject) 
cannot accumulate enough valid mass without \emph{exponentially} many queries. At the
same time, Lemma~\ref{thm:amplification-search-reliability} gives the 
sufficient hallucination certificate $q\varepsilon_{\mathrm{fp}}\le\alpha$. If
$\varepsilon_{\mathrm{fp}}$ is bounded away from zero, the certificate window
can \emph{close} for large $T$. In contrast, geometric amplification grows valid mass
multiplicatively and can reach the abstention threshold quickly enough. %

\mypara{Implications for VLAs} 
Our analysis above provides us with key takeaways. First, long-horizon planning is difficult because success is a product of {conditional} progress factors. 
This problem has been long recognized in robotics and remains applicable to contemporary VLAs.
A practical (and historically natural) way to address the horizon barrier is to leverage hierarchy, e.g., verify and replan at intermediate semantic or skill boundaries, which turns one length-$T$ rare-event into many shorter-horizon problems. A complementary approach is to verify actions/plans. 
The verification-guided framework in Lemma~\ref{thm:amplification-search-reliability} clarifies what test-time compute can and cannot buy with noisy verifiers. 
The prescriptive message is that high-level VLM/VLA planners should not treat
verification as a binary filter on complete trajectories (more in Appendix~\ref{app:nonadaptive-window}). They should use \emph{feedback} 
to reshape $Q_j$ so that $\rho_j$ grows \textit{multiplicatively}. 
In robotics, gains often come from exploiting \emph{structure}; planners use verifier tests/scoring (e.g., partial constraints or intermediate progress tests) to prune large families of continuations at once (e.g., via tree search). See Appendix~\ref{app:family-level-pruning} for a short but more formal discussion.
These methodologies have been recently adopted in LLMs (e.g., ~\cite{yaotot23,zhou2024language,snell2024scalingllmtesttimecompute}) and are finding their way into VLAs~\cite{yoon2025monte,zhaollms23}, and our theory makes clear when and why search is important.

\section{Conclusions, Discussion, and Future Work}
\label{sec:discussion}

Taken together, our results suggest that robust VLAs are unlikely to emerge from scale alone. A core challenge is \emph{structural}: hallucination-free and successful behavior lives on disconnected and thin subsets of action space, and long horizons turn small local errors into global failures. Our analysis points to design principles more specific than ``scale the data'', ``increase the number of layers'', or ``add a verifier'': we should design VLA systems so that test-time computation is spent on {multiplicative amplification} of \emph{valid} behavior under controlled verification error, and \emph{structure} robot architectures to make that amplification possible.

\mypara{Limitations and future work} We view this work as a step toward a broader theory of VLAs and other RFMs such as World Action Models. Extending our theory to encompass perception, memory, and stochasticity is a natural step forward. Our verification-guided planning model also simplifies planning and verification; richer models could tighten amplification requirements and yield sharper algorithmic guidance. Beyond verification, future work can examine other reasoning approaches~\cite{zawalski2024robotic,molmoact,nvidia2026alpamayo}. 
Finally, it would be interesting to extend our theory towards task distributions and to training/test-time interventions that reduce hallucinations. Progress along these lines can guide the development of next-generation VLAs that are not only more capable, but more \emph{trustworthy}.

\bibliographystyle{unsrtnat}
\bibliography{references}

\clearpage 

\appendix

\section*{\Large Supplementary Material for ``Action Hallucination in Generative Vision-Language-Action Models''}

\section{Topological Barrier: Proofs and Additional Details}

\subsection{Proof of Topological Barrier (Lemma~\ref{lem:topological})}
\label{app:proof:topo}

\TopologicalBarrier*

\begin{proof}
We fix a state $s$ and suppress the dependence on $s$ to avoid clutter.

\paragraph{Nonemptiness.}
By Assumption~\ref{ass:latent-prior}, $\Zspace$ is path-connected, so there exists a continuous path
$\gamma:[0,1]\to \Zspace$ with $\gamma(0)=z_L$ and $\gamma(1)=z_R$.
Define the action-space path $g:[0,1]\to\A$ by $g(t):=\pi_\theta(s,\gamma(t))$.
By continuity of $z\mapsto \pi_\theta(s,z)$, the map $g$ is continuous.

By assumption, $g(0)\in U_L\subseteq \A_{\mathrm{safe}}(s)$ and $g(1)\in U_R\subseteq \A_{\mathrm{safe}}(s)$.
If $g(t)\in \A_{\mathrm{safe}}(s)$ for all $t\in[0,1]$, then $g$ would be a continuous path in $\A_{\mathrm{safe}}(s)$
connecting a point in $U_L$ to a point in $U_R$, contradicting that $U_L$ and $U_R$ are distinct path-connected components
of $\A_{\mathrm{safe}}(s)$ (Assumption~\ref{ass:disconnected}(i)).
Therefore, there exists $t^\star\in(0,1)$ such that $g(t^\star)\notin \A_{\mathrm{safe}}(s)$, i.e.,
$g(t^\star)\in \A_{\mathrm{forb}}(s)$.
Let $z^\star:=\gamma(t^\star)$; then $z^\star\in \Zspace_{\mathrm{seam}}(s)$, so $\Zspace_{\mathrm{seam}}(s)$ is nonempty.

\paragraph{Openness.}
By Assumption~\ref{ass:disconnected}(ii), $\A_{\mathrm{forb}}(s)$ is open in $\A$, and by construction
$\Zspace_{\mathrm{seam}}(s)=\pi_\theta(s,\cdot)^{-1}(\A_{\mathrm{forb}}(s))$.
Since preimages of open sets under continuous maps are open, $\Zspace_{\mathrm{seam}}(s)$ is open in $\Zspace$.

\paragraph{Positive probability.}
Pick $z^\star\in\Zspace_{\mathrm{seam}}(s)$. Since $\Zspace_{\mathrm{seam}}(s)$ is open in $\Zspace$
and $\Zspace$ is open in $\R^m$, there exists $r>0$ such that the closed ball
$K:=\overline{B(z^\star,r)}\subset \Zspace_{\mathrm{seam}}(s)$ and $K\subset \Zspace$.
By Assumption~\ref{ass:latent-prior}, there exists $\rho_{\min}(K)>0$ such that
$\rho_Z(z)\ge \rho_{\min}(K)$ for almost every $z\in K$.
Hence
\begin{align*}
H_\theta(s) = \Pr[Z\in \Zspace_{\mathrm{seam}}(s)] \ge \Pr[Z\in K]  = \int_K \rho_Z(z)dz \ge \rho_{\min}(K)\Vol(K) > 0.
\end{align*}
\end{proof}

\subsection{Proof of Isoperimetric Lower-Bound (Theorem~\ref{thm:isoperimetry})}
\label{app:proof:isoperimetric}

Our quantitative bound follows the same isoperimetric / latent pre-image boundary methodology used to prove precision limitations for Lipschitz pushforward generative models~\cite{salmona2022can,Tanielian2020,Issenhuth2023}, but we adapt this methodology to state-conditional action generation and interpret boundary mass as unsafe-action probability under a physically meaningful forbidden buffer assumption, with a local-Lipschitz typical-set correction.

\isoperimetric*
\begin{proof}
Fix \(s\) and $\ell$. Let $B_R := \{z\in\R^m:\|z\|\le R\}$, 
\(Z\sim\mathcal N(0,I_m)\), and let \(\gamma_m(\cdot)\) denote
standard Gaussian measure on \(\R^m\), so that
\(\Pr[Z\in E]=\gamma_m(E)\) for measurable \(E\). 

For each mode \(U_i\), define 
  $A_i := V_i\cap B_R$,
  and
  $p_i^{(R)} := \gamma_m(A_i)$.
Since \(U_i\) is closed and \(z\mapsto \pi_\theta(s,z)\) is continuous,
\(V_i\) is closed. Hence \(A_i\) is closed, and in particular Borel, so
\(p_i^{(R)}\) is well-defined and Gaussian isoperimetry applies to
\(A_i\).

\paragraph{Step 1: Separation of in-ball preimages.}
Fix \(i\neq j\). If either \(A_i\) or \(A_j\) is empty, the claimed
separation is trivial. Otherwise, take any \(z\in A_i\) and
\(z'\in A_j\). Then \(z,z'\in B_R\), so Lipschitzness gives
$
  \|\pi_\theta(s,z)-\pi_\theta(s,z')\|
  \le L\|z-z'\|.
$ 
Moreover, \(\pi_\theta(s,z)\in U_i\) and
\(\pi_\theta(s,z')\in U_j\). By Assumption~\ref{ass:buffered-modes},
$
  \operatorname{dist}(U_i,U_j)\ge 2W,
$ 
and therefore
\[
  2W
  \le
  \|\pi_\theta(s,z)-\pi_\theta(s,z')\|
  \le
  L\|z-z'\|.
\]
Thus \(\|z-z'\|\ge 2W/L=2\epsilon\). Taking the infimum over
\(z\in A_i\) and \(z'\in A_j\) yields 
$
  \operatorname{dist}(A_i,A_j)\ge 2\epsilon.
$

\paragraph{Step 2: Disjoint halos and inclusion in the seam inside \(B_R\).}
Fix any radius \(r\in(0,\epsilon)\). For each \(i\), define the open
\(r\)-halo around \(A_i\) by
$
  S_i^r := \{z\in\R^m:\ 0<\operatorname{dist}(z,A_i)<r\}.
$
If \(A_i=\emptyset\), we take \(S_i^r=\emptyset\).

\emph{Disjointness.}
Suppose, for contradiction, that \(z\in S_i^r\cap S_j^r\) for some
\(i\neq j\). Then there exist \(v_i\in A_i\) and \(v_j\in A_j\) such that $
  \|z-v_i\|<r$ and $\|z-v_j\|<r$.
Hence
\[
  \|v_i-v_j\|
  \le
  \|v_i-z\|+\|z-v_j\|
  <2r
  <2\epsilon,
\]
contradicting \(\operatorname{dist}(A_i,A_j)\ge 2\epsilon\). Therefore
the sets \(\{S_i^r\}_{i=1}^M\) are pairwise disjoint.

\emph{Halo points inside \(B_R\) map to forbidden actions.}
Take any \(z\in S_i^r\cap B_R\). By definition of \(S_i^r\), there exists
\(v\in A_i\) such that
$
  \|z-v\|<r<\epsilon .
$ 
Since \(z,v\in B_R\), Lipschitzness gives
\[
  \|\pi_\theta(s,z)-\pi_\theta(s,v)\|
  \le
  L\|z-v\|
  <
  L\epsilon
  =
  W.
\]
Because \(v\in A_i\subseteq V_i\), we have
\(\pi_\theta(s,v)\in U_i\). Therefore
$ \operatorname{dist}(\pi_\theta(s,z),U_i)<W.
$ 
Also, \(z\in S_i^r\) implies \(z\notin A_i\). Since \(z\in B_R\) and
\(A_i=V_i\cap B_R\), it follows that \(z\notin V_i\), and hence
$
  \pi_\theta(s,z)\notin U_i.
$ 
Thus
$
  \pi_\theta(s,z)
  \in
  \mathcal A\setminus U_i
$ and 
$
\operatorname{dist}(\pi_\theta(s,z),U_i)<W.
$
By the forbidden-buffer condition in Assumption~\ref{ass:buffered-modes},
$
  \pi_\theta(s,z)\in \mathcal A_{\mathrm{forb}}(s).
$
Therefore \(z\in\Zspace_{\mathrm{seam}}(s)\), and we conclude
\[
  \bigcup_{i=1}^M (S_i^r\cap B_R)
  \subseteq
  \Zspace_{\mathrm{seam}}(s).
\]

\paragraph{Step 3: Gaussian isoperimetry.}
For \(r\in(0,\epsilon)\), let
$
  A_i^r := \{z\in\R^m:\operatorname{dist}(z,A_i)<r\}
$
denote the open \(r\)-neighborhood of \(A_i\). Since \(A_i\) is closed,
$
  S_i^r = A_i^r\setminus A_i .
$
Hence
\[
  \Pr[Z\in S_i^r]
  =
  \gamma_m(A_i^r)-\gamma_m(A_i)
  =
  \gamma_m(A_i^r)-p_i^{(R)}.
\]
By the Gaussian isoperimetric inequality,
\[
  \gamma_m(A_i^r)
  \ge
  \Phi\!\left(\Phi^{-1}(p_i^{(R)})+r\right).
\]
Therefore
\[
  \Pr[Z\in S_i^r]
  \ge
  \Phi\!\left(\Phi^{-1}(p_i^{(R)})+r\right)-p_i^{(R)}.
\]

\paragraph{Step 4: Sum halos and apply the tail correction.}
Since the halos \(S_i^r\) are pairwise disjoint,
\[
  \Pr\!\left[Z\in\bigcup_{i=1}^M S_i^r\right]
  =
  \sum_{i=1}^M \Pr[Z\in S_i^r].
\]
Moreover,
\begin{align*}
  \Pr\!\left[Z\in\bigcup_{i=1}^M (S_i^r\cap B_R)\right]
  &=
  \Pr\!\left[Z\in\bigcup_{i=1}^M S_i^r\right]
  -
  \Pr\!\left[
    Z\in
    \left(\bigcup_{i=1}^M S_i^r\right)\cap B_R^c
  \right] \\
  &\ge
  \Pr\!\left[Z\in\bigcup_{i=1}^M S_i^r\right]
  -
  \Pr[Z\in B_R^c].
\end{align*}
Combining this with Step 2 gives
\begin{align*}
  H_\theta(s)
  &=
  \Pr[Z\in\Zspace_{\mathrm{seam}}(s)] \\
  &\ge
  \Pr\!\left[Z\in\bigcup_{i=1}^M (S_i^r\cap B_R)\right] \\
  &\ge
  \sum_{i=1}^M \Pr[Z\in S_i^r]
  -
  \Pr[\|Z\|>R] \\
  &\ge
  \sum_{i=1}^M
  \left[
    \Phi\!\left(\Phi^{-1}(p_i^{(R)})+r\right)
    -
    p_i^{(R)}
  \right]
  -
  q(R).
\end{align*}
This bound holds for every \(r\in(0,\epsilon)\). Letting
\(r\uparrow\epsilon\) and using continuity of \(\Phi\), we obtain
\[
  H_\theta(s)
  \ge
  \sum_{i=1}^M
  \left[
    \Phi\!\left(\Phi^{-1}(p_i^{(R)})+\epsilon\right)
    -
    p_i^{(R)}
  \right]
  -
  q(R).
\]
\end{proof}

\mypara{Remark: Small $\epsilon$ regime} In the situation where $\epsilon$ is small, we can obtain that 
\begin{equation}
  H_\theta(s)
  \ge
  \epsilon\sum_{i=1}^M \phi\bigl(\Phi^{-1}(p_i^{(R)})\bigr)
  -
  q(R)
  -
  O(\epsilon^2),
  \label{eq:iso-linear}
\end{equation}
where $\phi$ is the standard normal PDF. To see this, fix $i$ and write $x_i := \Phi^{-1}(p_i^{(R)})$, so that $\Phi(x_i)=p_i^{(R)}$.
Taylor's theorem with Lagrange remainder gives
\[
  \Phi(x_i+\epsilon)
  =
  \Phi(x_i) + \epsilon\,\phi(x_i) + \frac{\epsilon^2}{2}\,\Phi''(\xi_i)
\]
for some $\xi_i$ between $x_i$ and $x_i+\epsilon$.
Since $\Phi'(x)=\phi(x)$ and $\Phi''(x)=-x\phi(x)$, and
$\sup_{x\in\R}|x\phi(x)| = 1/\sqrt{2\pi e}$, we obtain the uniform bound
\[
  \Phi(x_i+\epsilon)-p_i^{(R)}
  =
  \epsilon\,\phi(x_i) + O(\epsilon^2)
  \quad
  \text{with } |O(\epsilon^2)|\le \frac{\epsilon^2}{2\sqrt{2\pi e}}.
\]
Summing over $i$ and subtracting $\Pr[\|Z\|>R]$ yields~\eqref{eq:iso-linear}.

\mypara{Remark: Optimizing over the typical-set radius $R$} We can tighten the bound in Theorem~\ref{thm:isoperimetry}; for any $R>0$ such that $\pi(s,\cdot)$ is Lipschitz on $B_R$ with constant $L_R$, define $\epsilon_R = W/L_R$. Then \eqref{eq:iso-bound} holds with $\epsilon$ replaced by $\epsilon_R$ and  $H_\theta(s)$ is lower bounded by the supremum of the RHS over $R$,
\begin{align*}
H_\theta(s) \geq \sup_{R > 0: L_R < \infty} \left\{ \sum_{i=1}^M \left[\Phi\left( \Phi^{-1}(p_i^{(R)})  + \epsilon_R \right)  - p_i^{(R)} \right] - q(R) \right\}
\end{align*}

\section{Precision Barrier: Proofs and Additional Details}
\label{app:precision-barrier-all}

\subsection{Proof of Precision Barrier (Lemma~\ref{lem:density-barrier})}
\label{app:precisionbarrier}

We will make use of the following lemma about the volume of tubular neighborhoods, which is based on Theorem A in~\cite{cannarsa2010minkowski}. 

\begin{lemma}[Volume of Tubes Around Submanifolds]
\label{lem:tube}
Let $\M \subset \R^d$ be a compact $C^1$ submanifold of dimension $k < d$.
Then there exist constants $C_\M > 0$ and $\bar{\delta} > 0$ such that for all $0 < \delta \le \bar{\delta}$,
\[
  \operatorname{Vol}(\M_\delta) \le C_\M \delta^{d-k},
\]
where $\operatorname{Vol}$ denotes the $d$-dimensional Lebesgue measure.
\end{lemma}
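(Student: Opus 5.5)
The plan is to cover $\M$ by finitely many local graph charts and bound the tube volume chart by chart. Compactness makes the finite cover possible, and the $C^1$ structure makes each chart a Lipschitz graph with controlled constants.

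\emph{Charts.} For each $x\in\M$, the $C^1$ submanifold structure gives a neighborhood $O_x$ of $x$ in $\R^d$ and a rigid motion in which $\M\cap O_x$ is the graph $\{(u,g_x(u)) : u\in D_x\}$. Here $D_x\subset\R^k$ is a bounded open set and $g_x:D_x\to\R^{d-k}$ is $C^1$ with $\|Dg_x\|\le 1$. Shrinking $O_x$ if necessary, I take the chart to be defined on a slightly larger closed domain $\overline{D'_x}\supset D_x$, so the Lipschitz bound and boundedness are uniform. By compactness, choose finitely many points $x_1,\dots,x_N$ whose half-size neighborhoods cover $\M$. Let $r_0>0$ be a Lebesgue number of this cover, in the sense that every point of $\M$ has its $r_0$-ball contained in some full-size $O_{x_j}$. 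Set $\bar\delta:=r_0/4$.

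\emph{Local tube estimate.} Take $a\in\M_\delta$ with $\delta\le\bar\delta$. Compactness of $\M$ gives a nearest point $x\in\M$ with $\|a-x\|\le\delta$. Pick $j$ with $B(x,r_0)\subset O_{x_j}$. In chart $j$'s coordinates write $a=(u,v)$ and $x=(u_0,g_j(u_0))$. Then $|u-u_0|\le\delta$ and $|v-g_j(u_0)|\le\delta$, so
\[
|v-g_j(u)|\le |v-g_j(u_0)|+|g_j(u_0)-g_j(u)|\le\delta+\delta=2\delta .
\]
The margin built into the chart domain guarantees that $u$ stays in the enlarged domain $D'_j$. Therefore
\[
\M_\delta\subseteq\bigcup_{j=1}^N T_j,\qquad T_j:=\bigl\{(u,v): u\in D'_j,\ |v-g_j(u)|\le 2\delta\bigr\}.
\]
By Fubini, $\Vol(T_j)=\Vol_k(D'_j)\,\omega_{d-k}(2\delta)^{d-k}$, where $\omega_{d-k}$ is the volume of the unit ball in $\R^{d-k}$. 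Summing over the charts,
\[
\Vol(\M_\delta)\le C_\M\,\delta^{d-k},\qquad C_\M:=2^{d-k}\omega_{d-k}\sum_{j=1}^N\Vol_k(D'_j).
\]
This constant depends only on $\M$, through the chosen cover.

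\emph{Main obstacle.} The delicate step is making the cover uniform. The nearest point $x$ must lie well inside a single chart, so that the nearby parameter $u$ remains in that chart's domain and $\Vol_k(D'_j)$ stays finite. This is exactly what compactness together with the Lebesgue-number argument provides; it is also why the bound is only claimed for $\delta\le\bar\delta$. Alternatively, one could simply invoke Theorem A of \cite{cannarsa2010minkowski}, which gives finite upper Minkowski content for such sets. From that, $\limsup_{\delta\to0}\Vol(\M_\delta)/\delta^{d-k}<\infty$, and the bound for all $\delta\le\bar\delta$ follows after enlarging the constant.
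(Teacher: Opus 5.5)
Your proof is correct, but your main route is not the paper's. The paper builds no charts. It notes that a compact $C^1$ $k$-submanifold is $k$-rectifiable and invokes Theorem~A of \cite{cannarsa2010minkowski} to get $\Vol(\M_\delta)/(\omega_{d-k}\delta^{d-k})\to\mathcal H^k(\M)$ as $\delta\to0^+$. It then takes $\varepsilon=1$ in the definition of the limit to obtain $C_\M=\omega_{d-k}(\mathcal H^k(\M)+1)$ on some interval $(0,\bar\delta]$; this is essentially the alternative you sketch in your last sentences.

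Your local-graph-plus-Fubini argument is self-contained and elementary, and it shows exactly where compactness enters (finite cover, Lebesgue number, $\bar\delta=r_0/4$). The price is a cruder, cover-dependent constant $2^{d-k}\omega_{d-k}\sum_j\Vol_k(D'_j)$, since the charts overlap and you use a $2\delta$ slab. The cited theorem gives the asymptotically sharp constant: any $C_\M>\omega_{d-k}\mathcal H^k(\M)$ works for small $\delta$. It also applies to arbitrary closed $k$-rectifiable sets, e.g.\ manifolds with boundary such as the grasp manifold with $h\in[h_{\min},h_{\max}]$ used in the experiments. Your boundaryless graph charts would need a Lipschitz extension of $g_j$ near boundary points to cover that case.

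One step needs another sentence. The bound $\|Dg_j\|\le1$ gives $|g_j(u_0)-g_j(u)|\le|u-u_0|$ only if the segment $[u_0,u]$ lies in the chart domain, and ``the margin built into the chart domain'' does not by itself say why. This is immediate if you take each $O_{x_j}$ to be a cylinder $D_j\times E_j$ in the rotated coordinates, with $\M\cap O_{x_j}$ the graph of $g_j$ over $D_j$. Then $B(x,r_0)\subset O_{x_j}$ forces the $k$-dimensional ball $B(u_0,r_0)\subset D_j$. Hence $u\in D_j$ and the segment stays inside, and the enlarged domains $D'_j$ become unnecessary.
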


\begin{proof}
For $\delta>0$, $\M_\delta$ is the closed $\delta$-neighborhood
of $\M$:
\[
\M_\delta = \overline{B}(\M,\delta) := \{x\in \R^d : \operatorname{dist}(x,\M)\le \delta\}.
\]
Let
\[
\omega_m := \operatorname{Vol}_m\big(B_{\R^m}(0,1)\big)
\]
denote the $m$-dimensional Lebesgue volume of the unit Euclidean ball. Since $\M$ is a compact $C^1$ $k$-dimensional submanifold of $\R^d$, it is $k$-rectifiable~\cite{mattila2021rect}. Hence Theorem~A~\cite{cannarsa2010minkowski} applies with $n=d$ and $p=k$,
and yields the existence of the (finite) limit
\[
\lim_{\delta\to 0^+}\frac{\operatorname{Vol}(\M_\delta)}{\omega_{d-k}\,\delta^{d-k}}
=:\operatorname{Area}_k(\M),
\]
where $\operatorname{Area}_k(\M)$ denotes the intrinsic $k$-dimensional surface area of $\M$ (i.e., the $k$-dimensional Hausdorff measure). Let $L_{\mathcal M} := \operatorname{Area}_k(\mathcal M)$ and define
\[
f(\delta):=\frac{\Vol(\M_\delta)}{\omega_{d-k}\,\delta^{d-k}}.
\]
Then $\lim_{\delta\to 0^+} f(\delta)=L_\M$. Taking $\varepsilon=1$ in the
definition of convergence, there exists $\delta_0>0$ such that for all
$0<\delta<\delta_0$,
\[
|f(\delta)-L_\M|<1 \quad\Rightarrow\quad f(\delta) < L_\M+1.
\]
Set $\bar\delta:=\delta_0/2$. Then for all $0<\delta\le \bar\delta$, $f(\delta)\le L_\M+1$. Multiplying through gives
\[
\Vol(\M_\delta)
\le \underbrace{\omega_{d-k}(L_\M+1)}_{C_\M}\delta^{d-k}
\]
\end{proof}

Using Lemma~\ref{lem:tube}, we prove our precision barrier below.

\precisionbarrier*
\begin{proof}
Fix $0<\delta\le \bar\delta$ (where $\bar\delta$ will be chosen below).
Since $\A_{\mathrm{safe}}(s;\delta)\subseteq \M_\delta$ and $p(\cdot\mid s)\ge 0$,
\[
  S_\theta(s;\delta)
  = \int_{\A_{\mathrm{safe}}(s;\delta)} p(a\mid s)\,da
  \le \int_{\M_\delta} p(a\mid s)\,da.
\]
Since
$p(a\mid s)\le \esssup_{u\in \M_\delta} p(u\mid s)$ for almost every $a\in \M_\delta$,
\[
  \int_{\M_\delta} p(a\mid s)\,da
  \le \Vol(\M_\delta)\cdot \esssup_{a\in \M_\delta} p(a\mid s).
\]
Applying Lemma~\ref{lem:tube} to the compact $C^1$
$k$-submanifold $\M\subset\R^d$ provides constants $C_\M>0$ and
$\bar\delta>0$ such that for all $0<\delta\le \bar\delta$,
\[
  \Vol(\M_\delta)\le C_\M\,\delta^{d-k}.
\]
Combining the above yields
\[
  S_\theta(s;\delta)\le C_\M\,\delta^{d-k}\cdot \esssup_{a\in \M_\delta} p(a\mid s),
\]
and therefore
\[
  H_\theta(s;\delta)=1-S_\theta(s;\delta)
  \ge 1 - C_\M\,\delta^{d-k}\cdot \esssup_{a\in \M_\delta} p(a\mid s).
\]
Note that if $H_\theta(s;\delta)\le \eta$, then $S_\theta(s;\delta)\ge 1-\eta$, and rearranging the success bound gives
\[
  \esssup_{a\in \M_\delta} p(a\mid s)
  \ge \frac{1-\eta}{C_\M}\,\delta^{-(d-k)}.
\]
\end{proof}

\subsection{Proof of Generative Trilemma (Theorem~\ref{thm:precision-trilemma})}
\label{app:trilemma}

\trilemma*
\begin{proof}
Consider the active set $Z_\delta = F^{-1}(\M_\delta) \cap U$. Since $\sigma_{\min}(J_F(z)) \ge \sigma_*(\delta) > 0$ for almost every $z \in Z_\delta$, the Inverse Function Theorem implies that $F$ is a local diffeomorphism in a neighborhood of any point in $Z_\delta$.
Since $\operatorname{supp}(p_Z) \subseteq U$, the density of the pushforward distribution $a = F(z)$ is fully determined by preimages in $U$. Using the Area Formula (change-of-variables for many-to-one maps), for almost every $a \in \M_\delta$:
\[
  p(a\mid s) =  \sum_{z \in F^{-1}(a) \cap U} \rho_Z(z) \left| \det J_F(z) \right|^{-1}.
\]
If $a \in \M_\delta$, then any preimage $z \in F^{-1}(a) \cap U$ lies in $Z_\delta$ by definition.
We bound the terms uniformly:
\begin{enumerate}[itemsep=1pt,topsep=1pt]
\item {Density:} $\rho_Z(z) \le \rho_Z^{\max}$.
\item {Jacobian:} for $z \in Z_\delta$,
$
|\det J_F(z)| \ge (\sigma_{\min}(J_F(z)))^d \ge \sigma_*(\delta)^d.
$
\end{enumerate}
Substituting these bounds into the sum gives, for a.e.\ $a\in\M_\delta$,
\begin{align}
  p(a\mid s) & \le \sum_{z \in F^{-1}(a) \cap U}\frac{\rho_Z^{\max}}{\sigma_*(\delta)^d} \nonumber \\
  & = \#\{z \in Z_\delta : F(z)=a\}\cdot\rho_Z^{\max}\cdot\sigma_*(\delta)^{-d}. \nonumber
\end{align}
Taking the essential supremum over $a \in \M_\delta$ yields
\[
  \operatorname*{ess\,sup}_{a \in \M_\delta} p(a\mid s) \le N_\delta \rho_Z^{\max} \sigma_*(\delta)^{-d}.
\]
Applying Lemma~\ref{lem:density-barrier},
\[
  H_\theta(s;\delta) \ge 1 - C_\M\,\delta^{d-k}\cdot\operatorname*{ess\,sup}_{a \in \M_\delta} p(a\mid s).
\]
Finally, if $H_\theta(s;\delta)\le \eta$, then the lower bound 
must not exceed $\eta$, i.e.,
\[
  1 - C_\M\delta^{d-k}N_\delta\rho_Z^{\max}\sigma_*(\delta)^{-d}
  \le \eta,
\]
which we can rearrange to \eqref{eq:tradeoff}.
\end{proof}

\subsection{Proof of K-step tradeoff (Corollary~\ref{cor:k-step-tradeoff})}
\label{app:subsec:kstep-tradeoff}

\ksteptradeoff*

\begin{proof}
Since each $\Phi_t$ is a diffeomorphism, their composition $F^{(K)}$ is a diffeomorphism, hence $N_\delta=1$. The singular-value lower bound follows from the chain rule and the standard inequality $\sigma_{\min}(AB)\ge \sigma_{\min}(A)\sigma_{\min}(B)$. Apply Theorem~\ref{thm:precision-trilemma} with $N_\delta=1$ and $\sigma_*(\delta)\ge \prod_t\lambda_t$ to obtain the hallucination bound. Rearranging and taking logs gives the number of required steps $K$. Note that the bound is vacuous when the right-hand side is negative.
\end{proof}

\section{Planning Hallucinations: Proofs and Additional Details}

\subsection{Proof for Horizon Barrier (Lemma~\ref{lem:horizon-barrier})}
\label{app:horizon-barrier-proof}

\horizonbarrier*
\begin{proof}
Let $E_k:=\{\pi_\theta(S_k,Z_k)\in \A_{\mathrm{prog}}(S_k,T-k)\}$ for $k=0,\dots,T-1$. Under deterministic dynamics and the definition of $\A_{\mathrm{prog}}$, the plan is valid if and only if
$\bigcap_{k=0}^{T-1}E_k$ occurs. By the chain rule,
\[
\Pr\Big[\bigcap_{k=0}^{T-1}E_k\Big]
=
\prod_{k=0}^{T-1}\Pr[E_k\mid E_0\cap\cdots\cap E_{k-1}]
=
\prod_{t=1}^{T}\eta_t,
\]
where we re-indexed with $t=T-k$, i.e., $\eta_t := \Pr[E_{T-t}\mid E_0\cap\cdots\cap E_{T-t-1}]$

On the event $E_0\cap\cdots\cap E_{T-t-1}$, the rollout state $S_{T-t}$ must lie in $\Sigma_t$ by definition of
$\A_{\mathrm{prog}}(S_{T-t-1},t+1)$ and $\Sigma_t$. Therefore $p_t(S_{T-t})\le \sup_{s\in\Sigma_t}p_t(s)=\gamma_t$ pointwise on this event,
implying $\eta_t\le \gamma_t$. Hence
\[
\Pr[f_{\mathrm{plan}}=1]=\prod_{t=1}^{T}\eta_t\le \prod_{t=1}^{T}\gamma_t,
\]
and the hallucination bounds follow by complement. The $\gamma^T$ specialization is immediate when $\gamma_t\le\gamma$ for all $t$.
\end{proof}

\subsection{Proof for Reliability search budget window}
\label{app:proof:reliabilitysearchwindow}

\realwindow*
\begin{proof}
Let $\mathcal F_{j-1}$ denote the $\sigma$-field generated by the planner's
history up to round $j-1$, including past proposals and verifier outcomes. On
$C_{j-1}$, the proposal distribution $Q_j$ is $\mathcal F_{j-1}$-measurable and
$\tau_j\sim Q_j$.

\smallskip
\noindent\textbf{Hallucination bound.}
Let
\[
F_j:=\{C_{j-1}\ \wedge\ \tau_j\notin V\ \wedge\ \tilde f_j(\tau_j)=1\}
\]
be the event that round $j$ produces a false acceptance. If the planner outputs
an invalid candidate, then exactly one of the events $F_1,\ldots,F_q$ occurs.
Thus
\[
        H(I)=\sum_{j=1}^q \Pr[F_j].
\]
By the conditional false-positive guarantee, for every history and proposed
invalid candidate $\tau\in V^c$,
\[
\Pr[\tilde f_j(\tau)=1\mid \mathcal F_{j-1},\tau_j=\tau]
    \le \varepsilon_{\mathrm{fp}}.
\]
Therefore, conditioning on $\mathcal F_{j-1}$,
\begin{align*}
\Pr[F_j\mid\mathcal F_{j-1}]
&=\mathbf 1\{C_{j-1}\}
  \int_{V^c} Q_j(d\tau)\,
  \Pr[\tilde f_j(\tau)=1\mid\mathcal F_{j-1},\tau_j=\tau] \\
&\le
\mathbf 1\{C_{j-1}\}\varepsilon_{\mathrm{fp}}Q_j(V^c) \\
&=
\mathbf 1\{C_{j-1}\}\varepsilon_{\mathrm{fp}}(1-Q_j(V)).
\end{align*}
Taking expectations gives
\[
\Pr[F_j]
\le
\varepsilon_{\mathrm{fp}}
\E\big[\mathbf 1\{C_{j-1}\}(1-Q_j(V))\big].
\]
By the definition of $\rho_j$, with the stated convention when
$\Pr[C_{j-1}]=0$,
\[
\E\big[\mathbf 1\{C_{j-1}\}(1-Q_j(V))\big]
=
\Pr[C_{j-1}](1-\rho_j).
\]
Hence
\[
H(I)
\le
\varepsilon_{\mathrm{fp}}
\sum_{j=1}^q \Pr[C_{j-1}](1-\rho_j)
\le q\varepsilon_{\mathrm{fp}}.
\]

\smallskip
\noindent\textbf{Abstention bound.}
Let $C_j$ be the event that the planner reaches round $j+1$, equivalently that
no acceptance occurred in rounds $1,\ldots,j$. Then $A(I)=\Pr[C_q]$ and $C_0$ is
the sure event. Let $T_j$ be the event that round $j$ gives a true acceptance:
\[
T_j:=\{C_{j-1}\ \wedge\ \tau_j\in V\ \wedge\ \tilde f_j(\tau_j)=1\}.
\]
Using the conditional true-acceptance guarantee,
\begin{align*}
\Pr[T_j]
&\ge
c\,\E\big[\mathbf 1\{C_{j-1}\}Q_j(V)\big] \\
&=
 c\,\Pr[C_{j-1}]\rho_j.
\end{align*}
Since every true acceptance prevents reaching the next round,
\begin{align*}
\Pr[C_j]
&\le \Pr[C_{j-1}]-\Pr[T_j] \\
&\le \Pr[C_{j-1}](1-c\rho_j).
\end{align*}
This recursive inequality remains valid when $\Pr[C_{j-1}]=0$. Iterating from
$\Pr[C_0]=1$ yields
\[
        A(I)=\Pr[C_q]\le \prod_{j=1}^q(1-c\rho_j).
\]
The certification statements follow by requiring the corresponding upper bounds
on $H(I)$ and $A(I)$ to be at most $\alpha$ and $\beta$, respectively.
\end{proof}

\subsection{Non-adaptive proposal-and-verify window and horizon fragility}
\label{app:nonadaptive-window}

Lemma~\ref{thm:amplification-search-reliability} is a useful tool to study how
fast adaptation must take place. We first specialize it to the common
non-adaptive case, where the planner repeatedly samples from the same proposal
distribution and only uses the verifier as an accept/reject filter.

\begin{corollary}[Non-adaptive proposal-and-verify window and horizon fragility]
\label{cor:nonadaptive-window}
Consider the verification-guided planner of
Lemma~\ref{thm:amplification-search-reliability}. Assume it is non-adaptive in
the sense that, on every reached round, $Q_j=Q$ for a fixed proposal distribution
$Q$ over $\mathcal P_I$. Let $\rho:=Q(V)$ and
$c:=1-\varepsilon_{\mathrm{fn}}$. Then
\[
H(I)\le \varepsilon_{\mathrm{fp}}(1-\rho)
       \sum_{j=1}^q\Pr[C_{j-1}]
\le q\varepsilon_{\mathrm{fp}}(1-\rho)
\le q\varepsilon_{\mathrm{fp}},
\]
and
\[
        A(I)\le (1-c\rho)^q.
\]
Consequently, if $\beta\in(0,1)$, $c\rho\in(0,1)$, and
$\varepsilon_{\mathrm{fp}}(1-\rho)>0$, the sharper non-adaptive certification
window is
\[
        \frac{\log \beta}{\log(1-c\rho)}
        \le q
        \le
        \frac{\alpha}{\varepsilon_{\mathrm{fp}}(1-\rho)}.
\]
The cruder instance-independent upper bound $q\le\alpha/\varepsilon_{\mathrm{fp}}$
also suffices when $\varepsilon_{\mathrm{fp}}>0$. If
$\varepsilon_{\mathrm{fp}}(1-\rho)=0$, the non-adaptive false-positive certificate
imposes no upper bound on $q$.

Moreover, assume $c>0$, $\beta\in(0,1)$, and $\rho>0$. If $Q$ is induced by
rolling out a $T$-step sequential policy and the horizon barrier gives
$\rho=Q(V)\le\gamma^T$ for some $\gamma\in(0,1)$, then any $q$ satisfying the
abstention certificate $(1-c\rho)^q\le\beta$ obeys
\[
        q\ge \frac{\log(1/\beta)}{-\log(1-c\rho)}
        \ge \frac{\log(1/\beta)}{-\log(1-c\gamma^T)}.
\]
In particular, when $c\gamma^T\le 1/2$, this certificate requires
\[
        q\ge \frac{1}{2c}\gamma^{-T}\log(1/\beta),
\]
which is exponential in $T$.
\end{corollary}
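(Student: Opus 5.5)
The plan is to specialize Lemma~\ref{thm:amplification-search-reliability} to the constant schedule and then handle the horizon consequence separately.

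\textbf{Step 1: the schedule is constant.} Since $Q_j=Q$ on every reached round, $Q_j(V)=\rho$ deterministically on $C_{j-1}$. Hence $\rho_j=\E[Q_j(V)\mid C_{j-1}]=\rho$ whenever $\Pr[C_{j-1}]>0$. When $\Pr[C_{j-1}]=0$ the convention sets $\rho_j=0$, but then the corresponding term $\Pr[C_{j-1}](1-\rho_j)$ vanishes either way.

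\textbf{Step 2: the hallucination bound.} Substituting into the hallucination bound of the lemma gives $H(I)\le\varepsilon_{\mathrm{fp}}(1-\rho)\sum_j\Pr[C_{j-1}]$. Bounding each $\Pr[C_{j-1}]\le 1$ and then $1-\rho\le1$ yields the chain of inequalities.

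\textbf{Step 3: the abstention bound.} The one subtlety is rounds with $\Pr[C_{j-1}]=0$, where the convention makes the factor $(1-c\cdot 0)=1$ rather than $(1-c\rho)$. I avoid this by rerunning the recursion from the lemma's proof directly: $\Pr[C_j]\le\Pr[C_{j-1}](1-c\rho)$, which holds trivially when $\Pr[C_{j-1}]=0$. Iterating from $\Pr[C_0]=1$ gives $A(I)\le(1-c\rho)^q$. This boundary case is the main point needing care.

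\textbf{Step 4: the certification window.} Require $(1-c\rho)^q\le\beta$. Taking logs and dividing by $\log(1-c\rho)<0$, which is valid since $c\rho\in(0,1)$, flips the inequality and gives $q\ge\log\beta/\log(1-c\rho)$. Require also $q\varepsilon_{\mathrm{fp}}(1-\rho)\le\alpha$, which gives the upper limit on $q$ when $\varepsilon_{\mathrm{fp}}(1-\rho)>0$. The cruder bound $q\le\alpha/\varepsilon_{\mathrm{fp}}$ implies this one. If $\varepsilon_{\mathrm{fp}}(1-\rho)=0$, the hallucination bound is $0$ for every $q$, so the false-positive certificate imposes no upper bound.

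\textbf{Step 5: horizon fragility.} Use the monotonicity of $x\mapsto-\log(1-cx)$. From $\rho\le\gamma^T$ we get $-\log(1-c\rho)\le-\log(1-c\gamma^T)$, which gives the second inequality; this requires $c\gamma^T<1$, which holds in the regime of interest. For the explicit rate, apply $-\log(1-x)\le 2x$ for $x\in[0,1/2]$. This follows from concavity of $-\log(1-x)-2x$ together with its values $0$ at $x=0$ and $\log2-1<0$ at $x=1/2$. It gives $-\log(1-c\gamma^T)\le 2c\gamma^T$, and hence $q\ge\gamma^{-T}\log(1/\beta)/(2c)$.
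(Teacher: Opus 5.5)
Your proposal is correct and follows the paper's proof essentially step for step: specialize the per-round false- and true-acceptance bounds to the constant proposal, rerun the recursion $\Pr[C_j]\le\Pr[C_{j-1}](1-c\rho)$ directly (which sidesteps the $\rho_j=0$ convention, as the paper also does), and finish with monotonicity of $x\mapsto-\log(1-cx)$ and the inequality $-\log(1-x)\le 2x$ on $[0,1/2]$. One slip is in your justification of that last inequality: $g(x)=-\log(1-x)-2x$ is \emph{convex}, since $g''(x)=(1-x)^{-2}>0$, not concave, and the argument works because a convex function with $g(0)=0$ and $g(1/2)=\log 2-1<0$ lies below its nonpositive chord (alternatively, $-\log(1-x)=\int_0^x\frac{dt}{1-t}\le\frac{x}{1-x}\le 2x$).
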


\begin{proof}
In the non-adaptive case, $Q_j=Q$ on every reached round. For false acceptances,
the proof of Lemma~\ref{thm:amplification-search-reliability} gives directly
\[
\Pr[F_j]\le
\varepsilon_{\mathrm{fp}}\E[\mathbf 1\{C_{j-1}\}(1-Q(V))]
=
\varepsilon_{\mathrm{fp}}(1-\rho)\Pr[C_{j-1}],
\]
and summing over $j$ yields the stated hallucination bound.
Similarly, at each reached round the true-acceptance probability is at least
$c\rho$, so
\[
        \Pr[C_j]\le \Pr[C_{j-1}](1-c\rho).
\]
Iterating gives $A(I)\le(1-c\rho)^q$.

The certification window follows by enforcing
$q\varepsilon_{\mathrm{fp}}(1-\rho)\le\alpha$ and
$(1-c\rho)^q\le\beta$. Since $\beta\in(0,1)$ and $c\rho\in(0,1)$, taking logs in
the abstention inequality gives
$q\ge \log\beta/\log(1-c\rho)$.

For the horizon-fragility statement, the abstention certificate implies
\[
        q\ge \frac{\log(1/\beta)}{-\log(1-c\rho)}.
\]
The map $x\mapsto -\log(1-cx)$ is increasing on $[0,1/c)$ for $c>0$. Since
$\rho\le\gamma^T$, the denominator is at most $-\log(1-c\gamma^T)$, giving
\[
        q\ge \frac{\log(1/\beta)}{-\log(1-c\gamma^T)}.
\]
Finally, if $x:=c\gamma^T\le1/2$, then $-\log(1-x)\le2x$, and therefore
\[
        q\ge \frac{1}{2c}\gamma^{-T}\log(1/\beta).
\]
\end{proof}

\subsection{Cumulative valid-mass condition for abstention control}
\label{app:proof:cumulative-mass}

\begin{corollary}[Cumulative valid-mass condition for abstention control]
\label{cor:cumulative-mass}
Under the assumptions of Lemma~\ref{thm:amplification-search-reliability},
\[
        A(I)\le \exp\!\left(-c\sum_{j=1}^q\rho_j\right).
\]
If $c>0$ and $\beta\in(0,1)$, a sufficient condition for $A(I)\le\beta$ is
\[
        \sum_{j=1}^q\rho_j\ge c^{-1}\log(1/\beta).
\]
\end{corollary}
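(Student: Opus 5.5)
The plan is to start from the abstention bound already established in Lemma~\ref{thm:amplification-search-reliability}, namely
\[
A(I)\le \prod_{j=1}^q(1-c\rho_j),
\]
and convert the product into an exponential of a sum. Each factor satisfies $0\le 1-c\rho_j\le 1$, because $c\in[0,1]$ and $\rho_j\in[0,1]$; here $\rho_j$ is a conditional probability, or $0$ by convention. I would apply the elementary inequality $1-x\le e^{-x}$, valid for all real $x$, with $x=c\rho_j$. Since every factor is nonnegative, the termwise bounds can be multiplied, giving
\[
\prod_{j=1}^q(1-c\rho_j)\le \prod_{j=1}^q e^{-c\rho_j}=\exp\!\Bigl(-c\sum_{j=1}^q\rho_j\Bigr).
\]
This is the first claim.

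For the sufficient condition, assume $c>0$ and $\beta\in(0,1)$. Suppose $\sum_j\rho_j\ge c^{-1}\log(1/\beta)$. Multiplying by $c>0$ gives $c\sum_j\rho_j\ge\log(1/\beta)$. Since $\exp(-\cdot)$ is decreasing, it follows that $\exp(-c\sum_j\rho_j)\le \beta$, and hence $A(I)\le\beta$.

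The only point that needs care is the nonnegativity of the factors, which is what justifies multiplying the inequalities. It holds as argued above, so I do not expect a genuine obstacle.
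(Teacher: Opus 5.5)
Your proposal is correct and follows the paper's proof: apply $1-x\le e^{-x}$ termwise to the product bound from Lemma~\ref{thm:amplification-search-reliability}, then rearrange $\exp(-c\sum_j\rho_j)\le\beta$. Your explicit check that each factor $1-c\rho_j$ is nonnegative is the justification for multiplying the termwise inequalities; the paper leaves this implicit.
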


\begin{proof}
From Lemma~\ref{thm:amplification-search-reliability},
\[
A(I)\le \prod_{j=1}^q(1-c\rho_j).
\]
Since $c\rho_j\ge0$ and $1-x\le e^{-x}$ for all $x\ge0$,
\[
\prod_{j=1}^q(1-c\rho_j)
\le
\prod_{j=1}^q e^{-c\rho_j}
=
\exp\!\left(-c\sum_{j=1}^q\rho_j\right).
\]
The sufficient condition follows by rearranging
$\exp(-c\sum_j\rho_j)\le\beta$.
\end{proof}

\subsection{Proof for Amplification-rate to beat the horizon barrier}
\label{app:proof:amplification-threshold}

\amprate*
\begin{proof}
Define the valid set of plans, 
\[
E_{\mathrm{val}}:=\{\hat\tau\neq\bot\ \wedge\ \hat\tau\in V\},
\]
so $S(I)=\Pr[E_{\mathrm{val}}]$. If the planner outputs a valid candidate, then
some reached round must have proposed a valid candidate. Therefore
\[
E_{\mathrm{val}}
\subseteq
\bigcup_{j=1}^q\bigl(C_{j-1}\cap\{\tau_j\in V\}\bigr).
\]
By the union bound,
\begin{align*}
S(I)
&\le
\sum_{j=1}^q\Pr[C_{j-1}\cap\{\tau_j\in V\}] \\
&=
\sum_{j=1}^q\Pr[C_{j-1}]\rho_j
\le
\sum_{j=1}^q\rho_j.
\end{align*}
If the planner is $(\alpha,\beta)$-reliable, then
$S(I)=1-A(I)-H(I)\ge1-\alpha-\beta$. Hence
\[
        \sum_{j=1}^q\Pr[C_{j-1}]\rho_j\ge1-\alpha-\beta,
        \qquad
        \sum_{j=1}^q\rho_j\ge1-\alpha-\beta.
\]

\smallskip
\noindent\textbf{(i)}
If $\rho_j\le\rho_1j^p$ and $\rho_1\le\gamma^T$, then
\[
\sum_{j=1}^q\rho_j
\le
\gamma^T\sum_{j=1}^q j^p
\le
\gamma^T\frac{(q+1)^{p+1}-1}{p+1},
\]
where the last inequality follows from
$\sum_{j=1}^q j^p\le\int_0^q(x+1)^p\,dx$. Combining this upper bound with the
necessary condition and rearranging gives
\[
q\ge
\left(1+\frac{(p+1)(1-\alpha-\beta)}{\gamma^T}\right)^{1/(p+1)}-1.
\]
For fixed $p$, $\gamma\in(0,1)$, and $1-\alpha-\beta>0$, this grows
exponentially in $T$.

\smallskip
\noindent\textbf{(ii)}
Lemma~\ref{thm:amplification-search-reliability} gives
$H(I)\le q\varepsilon_{\mathrm{fp}}$, so
$q\varepsilon_{\mathrm{fp}}\le\alpha$ certifies the hallucination constraint.
The same lemma also gives
\[
A(I)\le\prod_{j=1}^q(1-c\rho_j)
\le
\exp\!\left(-c\sum_{j=1}^q\rho_j\right),
\]
using $1-x\le e^{-x}$. Under the assumed geometric lower bound,
\[
\sum_{j=1}^q\rho_j
\ge
\sum_{j=1}^q\min\{1,\rho_1r^{j-1}\}.
\]
Thus $A(I)\le\beta$ is certified whenever
\[
\sum_{j=1}^q\min\{1,\rho_1r^{j-1}\}
\ge c^{-1}\log(1/\beta).
\]
If $\rho_1r^{q-1}\le1$, this sum is
$\rho_1(r^q-1)/(r-1)$, giving the displayed unsaturated sufficient condition and
the equivalent lower bound on $q$ within that regime.

It remains to justify the stated $\Theta(\log(1/\rho_1))$ scaling for the
geometric cumulative-mass condition. Let $L:=c^{-1}\log(1/\beta)>0$ be fixed.
For an upper bound, take
\[
q=\left\lceil\log_r(1/\rho_1)\right\rceil+\left\lceil L\right\rceil+1.
\]
By this time the terms $\min\{1,\rho_1r^{j-1}\}$ have saturated, and there are
at least $\lceil L\rceil$ unit terms, so the cumulative mass is at least $L$.
Thus $q=O(\log(1/\rho_1))$ suffices. For a lower bound, if
$q\le \frac12\log_r(1/\rho_1)$, then all terms are unsaturated for small
$\rho_1$ and
\[
\sum_{j=1}^q\min\{1,\rho_1r^{j-1}\}
\le \frac{\rho_1 r^q}{r-1}
\le \frac{\rho_1^{1/2}}{r-1}
\to 0,
\]
which is eventually smaller than the fixed positive threshold $L$. Hence any
$q$ satisfying the cumulative-mass condition is
$\Omega(\log(1/\rho_1))$, and the scaling is $\Theta(\log(1/\rho_1))$. If
$\rho_1\asymp\gamma^T$, this becomes $\Theta(T)$.
\end{proof}

\subsection{Action Chunking and the Barriers}
\label{app:chunking}

A common intuition is that if the {primitive} action is very low-level (e.g., small joint deltas), then the safe set $\A_{\mathrm{safe}}(s)=\{a:\ f_{\mathrm{phys}}(s,a)=1\}$ is often connected in many states, seemingly avoiding the topological barrier from Section~\ref{sec:topological}. This is frequently true at the \emph{one-step} level, but it does \emph{not} remove multimodality for
long-horizon goal-reaching. Chunking trades \emph{fewer decisions} against a \emph{harder per-decision sampling problem}:
\begin{enumerate}[leftmargin=*,itemsep=2pt,topsep=2pt]
\item \emph{Topology reappears at the progress/chunk level.}
Even if $\A_{\mathrm{safe}}(s)$ is connected for small one-step controls, the \emph{progress} set
$\A_{\mathrm{prog}}(s,t)$ can be disconnected at reachability bottlenecks. Two small safe actions can lead into different time-bounded reachable basins $\Sigma_{t-1}$, while ``in-between'' actions can be safe but \emph{non-progress} (leading to dead ends or timeouts). Chunking amplifies this effect since the progress set typically decomposes into more separated components (distinct partial trajectories/contact-mode prefixes), activating the same ``seam'' phenomenon from Section~\ref{sec:topological}, now with respect to progress/plan validity rather than one-step
physical invalidity.

\item \emph{Precision compounds within a chunk.}
In contact-rich tasks (Section~\ref{sec:codimension}), progress may require staying in a thin tube (or near a
manifold) over multiple successive steps. Requiring consecutive steps in the chunk to remain in such a tube makes the feasible region effectively thinner, decreasing the per-sample mass of $\A_{\mathrm{prog}}(s,t)$ (often sharply) as the chunk length grows.

\item \emph{Horizon compounding improves in count, worsens in mass.}
If the policy outputs chunks of length $\ell$ and commits to executing them, then Lemma~\ref{lem:horizon-barrier} applies with an \emph{effective} horizon of roughly $\lceil T/\ell\rceil$. Increasing $\ell$ reduces the number of factors in this product (helping the horizon barrier) but typically decreases each factor $\gamma_t$ (harder chunk feasibility due to topology/precision and open-loop drift), creating a natural ``sweet spot'' in $\ell$.
\end{enumerate}

This view also clarifies a common empirical design;  predict long chunks for temporal coherence but execute only a short prefix before replanning (receding-horizon), which reduces open-loop compounding and delays irreversible mode commitment while still leveraging temporally structured proposals.

\subsection{Family-level pruning and valid-mass amplification}
\label{app:family-level-pruning}

This appendix subsection formalizes the intuition from the VLA discussion. Fix a round $j$ and write $\rho=Q_j(V)$. Suppose a partial check identifies a measurable family of invalid continuations $B_j\subseteq V^c$, discards this  family, and preserves every valid continuation. Let $Q_j^+$ be the proposal distribution obtained by renormalizing $Q_j$ on $\mathcal P_I\setminus B_j$. Assume $Q_j(B_j)<1$. Then
\[
        Q_j^+(V)
        =\frac{Q_j(V)}{1-Q_j(B_j)}.
\]
If the pruning step removes at least a $\kappa$ fraction of the invalid mass,
namely $Q_j(B_j)\ge \kappa Q_j(V^c)$ for some $\kappa\in[0,1]$, then
\[
        Q_j^+(V)
        \ge
        \frac{Q_j(V)}{1-\kappa(1-Q_j(V))}.
\]
In particular, while $Q_j(V)\le 1/2$,
\[
        Q_j^+(V)
        \ge
        \frac{Q_j(V)}{1-\kappa/2}.
\]
Thus, a pruning operation that repeatedly removes a constant fraction of the currently invalid region can multiply the valid mass by a constant factor until the distribution is no longer dominated by invalid continuations. This is a simple analysis of what hierarchical decomposition, prefix feasibility tests, early collision checks, branch-and-bound, tree search, and replanning are meant to accomplish: they eliminate families of bad futures, rather than merely  rejecting isolated completed samples.

\section{Experimental Setup}
\subsection{Topology Barrier}
\label{app:band-topology-exp}

This appendix describes the implementation and hyperparameters used for the 2D {band topology} experiments in which we measure {action hallucination} and relate it to the isoperimetric lower bound. All experiments were run on a cloud server (4 vCPUs, 15 GB Memory) with two NVIDIA T4 GPUs.

\mypara{Band action-space geometry and safe modes} Actions are two-dimensional, $a=(x,y)\in\mathbb{R}^2$, with an \emph{action band}
\[
x\in[x_{\min},x_{\max}],\qquad y\in[-y_{\max},y_{\max}].
\]
Within this band, the safe set is the union of $M$ disconnected horizontal strips
$\{U_i\}_{i=1}^M$ separated by forbidden gaps. Let $r$ denote the strip half-width and $W$ denote the gap half-width.
The strip centers are placed symmetrically about $0$ with spacing
$2(r+W)$, i.e.
\[
\mu_i \;=\; \Big(i-\frac{M-1}{2}\Big)\,2(r+W),\qquad i\in\{0,\dots,M-1\}.
\]
The safe strips are
\[
U_i \;=\; \bigl\{(x,y): x\in[x_{\min},x_{\max}],\; y\in[\mu_i-r,\mu_i+r]\bigr\}.
\]
Any action outside the band, or in the gaps between strips, is {forbidden}
and counted as a hallucination.

\mypara{Training data distribution} We train on i.i.d.\ safe actions sampled from the union of strips. In the experiments reported here, we perform the following:
\begin{itemize}
  \item sample mode index $i \sim \text{Categorical}(\pi)$ with $\pi$ uniform over modes,
  \item sample $x \sim \text{Uniform}[x_{\min},x_{\max}]$,
  \item sample $y \sim \text{Uniform}[\mu_i-r,\mu_i+r]$.
\end{itemize}

\mypara{Models} All methods produce actions deterministically from a latent/noise variable in
$\mathbb{R}^2$; throughout, the base latent distribution is $Z\sim\mathcal{N}(0,I_2)$. We implemented two models:
\begin{itemize}
\item \textbf{Flow Matching.} We train a rectified flow vector field $v_\theta(x,t)$ with $t\in[0,1]$ using
flow-matching:
\[
x_t = (1-t)x_0 + t x_1,\qquad x_0 \sim p_{\text{data}},\quad x_1\sim\mathcal{N}(0,I_2),
\]
with target velocity $v^\star(x_t,t)=x_1-x_0$, optimized via MSE
$\|v_\theta(x_t,t)-v^\star(x_t,t)\|_2^2$.
Sampling is deterministic Euler (or Heun) integration from $t=1$ to $t=0$
starting at Gaussian noise.
\item \textbf{Diffusion.} We train a v-pred diffusion model with cosine schedule (default $T=200$) and exponential moving average (EMA) of parameters. Training uses MSE on the $v$-prediction target. Sampling is deterministic DDIM (i.e.\ $\eta=0$) with a user-specified number of sampling steps.
\end{itemize}

Both the flow and diffusion networks use the same backbone: a sinusoidal time embedding of dimension $64$, concatenated with the 2D input, followed by an MLP (depth 4 with hidden layers of width 256),
with \texttt{LayerNorm} and \texttt{SiLU} activations, and a final linear layer to $\mathbb{R}^2$.
We optimize with AdamW (default learning rate $2\times 10^{-4}$, weight decay $10^{-4}$), batch size $2048$, and gradient-norm clipping at $1.0$.
Unless otherwise stated, we train each  model for $100,000$ steps.

\mypara{Evaluation protocol and metrics}
For each configuration (defined by $(M,W)$ and the ``smoothness'' settings),
and for each random seed, we generate $N=10^6$ samples in batches of $4096$.
For each method we compute the (i) Hallucination probability, $ H \;=\; \Pr[a(Z)\notin \cup_{i=1}^M U_i]$, where any sample outside the band or in the gaps counts as forbidden, and (ii) the mode masses: $p_i = \Pr[a(Z)\in U_i]$ for $i\in\{1,\dots,M\}$.
We estimate a distribution of {local} Lipschitz proxies on the typical latent ball $B_R=\{z:\|z\|\le R\}$ with $R=3.0$ by sampling 2048 points from $Z\sim\mathcal{N}(0,I_2)$ truncated to
$B_R$, and estimating $\sigma_{\max}(z)\approx \|J(z)\|_{\mathrm{op}}$ via finite differences with step size 0.01.

\subsection{Precision Barrier}
\label{app:prec_exps}

We study a controlled ``glass/mug grasp'' proxy in which valid actions lie on a low-dimensional manifold
$\mathcal{M} \subset \mathbb{R}^d$.
Similar to the topological experiments, we train two generative models (Flow Matching and Diffusion, as in the topology experiments) on the same synthetic manifold distribution and compare them under consistent evaluation. Each action is represented by a 7D vector
\begin{equation}
a = [x, y, h, \mathrm{roll}, \mathrm{pitch}, \sin(\mathrm{yaw}), \cos(\mathrm{yaw})] \in \mathbb{R}^7,
\end{equation}
where yaw is encoded as $(\sin,\cos)$ to avoid discontinuities at $\pm\pi$.
Coordinates are normalized so that a Euclidean distance meaningfully mixes translational and rotational components. 

\mypara{Ideal side-grasp manifold}
The ideal grasp manifold $\mathcal{M}$ is parameterized by two intrinsic coordinates:
\begin{itemize}
    \item $\theta \in [0,2\pi)$: angle around the mug,
    \item $h$ in a valid height interval $[h_{\min}, h_{\max}]$.
\end{itemize}
This defines a $k=2$ dimensional manifold embedded in an ambient space of dimension $d=7$, hence codimension
$d-k = 5$.
Points on $\mathcal{M}$ satisfy: (i) $(x,y)$ lies on a ring of fixed radius $r$ (mug radius plus clearance), (ii) roll and pitch are zero, (iii) yaw points inward toward the mug center (a deterministic function of $\theta$), and (iv) $h$ lies in $[h_{\min}, h_{\max}]$. For any $a \in \mathbb{R}^7$, we compute
\begin{equation}
\mathrm{dist}(a,\mathcal{M}) := \|a - \Pi_{\mathcal{M}}(a)\|_2,
\end{equation}
where $\Pi_{\mathcal{M}}$ is an analytic projection that solves for the best $\theta^*$ (by jointly aligning the $(x,y)$ ring constraint and yaw $(\sin,\cos)$ consistency), clamps $h$ to $[h_{\min}, h_{\max}]$, and sets roll/pitch to $0$.

\mypara{Evaluation protocol and metrics}
For a trained model, we generate i.i.d.\ samples $\{x^{(i)}\}_{i=1}^{N}$ and evaluate the action hallucination curve
\begin{equation}
H(\delta) = \mathbb{P}\big(\mathrm{dist}(x,\mathcal{M}) > \delta\big)
\;\approx\;
\frac{1}{N}\sum_{i=1}^N \mathbf{1}\big[\mathrm{dist}(a^{(i)},\mathcal{M}) > \delta\big],
\end{equation}
over a log-spaced grid $\delta \in [\delta_{\min}, \delta_{\max}]$.
To probe whether concentration near a lower-dimensional manifold requires ill-conditioned transformations, we analyze Jacobians of sampler maps.
Let a sampler consist of step maps
$x_{k+1} = F_k(x_k)$, for $k=0,\dots,K-1$,
so the overall map is $G = F_{K-1}\circ \cdots \circ F_0$.
We compute per-step Jacobians $J_k=\partial F_k/\partial x$ and the global Jacobian
\begin{equation}
J_{\mathrm{global}} = \frac{\partial G}{\partial x} = J_{K-1}\cdots J_0.
\end{equation}
Conditioning is summarized via singular values $\sigma_{\min}(J)$ 
and diagnostics are computed along random sampler trajectories initialized from Gaussian noise.

\section*{Author Contributions and LLM Usage}
H. Soh conceived the central idea and wrote the majority of the manuscript. E. Lim verified the proofs and contributed improvements to the technical statements. LLMs were used to assist with idea exploration, proof development, code implementation, and manuscript editing. Illustrative images depicting the robot in various environments were generated using Gemini, while all schematic figures were created by the authors. All plots were produced from experiments described in the paper and appendix. The authors take full responsibility for the content of this work.

\end{document}